\title[Sharp analysis of low-rank 
kernel matrix approximations]{Sharp analysis of low-rank 
kernel matrix approximations}
\newcommand{\BEAS}{\begin{eqnarray*}}
\newcommand{\EEAS}{\end{eqnarray*}}
\newcommand{\BEA}{\begin{eqnarray}}
\newcommand{\EEA}{\end{eqnarray}}
\newcommand{\BEQ}{\begin{equation}}
\newcommand{\EEQ}{\end{equation}}
\newcommand{\BIT}{\begin{itemize}}
\newcommand{\EIT}{\end{itemize}}
\newcommand{\BNUM}{\begin{enumerate}}
\newcommand{\ENUM}{\end{enumerate}}
\newcommand{\BA}{\begin{array}}
\newcommand{\EA}{\end{array}}
\newcommand{\diag}{\mathop{\rm diag}}
\newcommand{\tr}{\mathop{ \rm tr}}
\newcommand{\idm}{I}
\newcommand{\rb}{\mathbb{R}}
\newcommand{\PP}{\mathbb{P}}
\newenvironment{proofsketch}{\par\noindent{\bf Proof sketch.\ }}{\hfill\BlackBox\\[2mm]}
\newcommand{\mysec}[1]{Section~\ref{sec:#1}}
\newcommand{\eq}[1]{Eq.~(\ref{eq:#1})}
\newcommand{\myfig}[1]{Figure~\ref{fig:#1}}
\def \E{{\mathbb E}}
\begin{document}

\maketitle

\begin{abstract}
We consider supervised learning problems within the positive-definite kernel framework, such as kernel ridge regression, kernel logistic regression or the support vector machine. With  kernels leading to infinite-dimensional feature spaces, a common practical limiting difficulty is the necessity of computing the kernel matrix, which most frequently leads to algorithms with running time at least quadratic in the number of observations $n$, i.e., $O(n^2)$. Low-rank approximations of the kernel matrix are often considered as they allow the reduction of  running time complexities  to $O(p^2 n)$, where~$p$ is the rank of the approximation. The practicality of such methods thus depends on the required rank $p$. In this paper, we show that in the context of kernel ridge regression, for approximations based on a random subset of columns of the original kernel matrix, the rank $p$ may be chosen to be linear in the \emph{degrees of freedom} associated with the problem, a quantity which is classically used in the statistical analysis of such methods, and is often seen as the implicit number of parameters of non-parametric estimators. This result enables simple algorithms that have sub-quadratic running time complexity, but provably exhibit the same \emph{predictive performance} than existing algorithms, for any given problem instance, and not only for worst-case situations.
\end{abstract}
 

\section{Introduction}

Kernel methods, such as the support vector machine or kernel ridge regression, are now widely used in many areas of science and engineering, such as computer vision or bioinformatics~\citep[see, e.g.,][]{scholkopf2004kernel,zhang2006local}. Their main attractive features are that (1) they allow non-linear predictions through the same algorithms than for linear predictions, owing to the kernel trick; (2) they allow the separation of the representation problem  (designing good kernels for non-vectorial data) and the algorithmic/theoretical problems (given a kernel, how to design, run efficiently and analyze estimation algorithms). Moreover, (3) their applicability goes beyond supervised learning problems, through the kernelization of classical unsupervised learning techniques such as principal component analysis or K-means. Finally, (4) probabilistic Bayesian interpretations through Gaussian processes allow their simple use within larger probabilistic models. For more details, see, e.g.,~\citet{GP,smola-book,Cristianini2004}.

However, kernel methods typically suffer from at least quadratic running-time complexity in the number of observations~$n$, as this is the complexity of computing the kernel matrix. In large-scale settings where $n$ may be  large, this is usually not acceptable. In these situations where plain kernel methods cannot be run, practitioners would commonly (a) turn to methods such as boosting, decision trees or random forests, which have both good running time complexity and predictive performance. However, these methods are typically run on data coming as vectors and usually put a strong emphasis on a sequence of decisions based on single variables. Another common solution is (b) to stop using infinite-dimensional kernels and restrict the kernels to be essentially linear kernels (i.e., by choosing an explicit representation of the data whose size is independent of the number of observations) where the non-parametric kernel machinery (of adapting the complexity of the underlying predictor to the size of the dataset) is lost, and the methods may then \emph{underfit}.

In this paper, we consider the traditional kernel set-up for supervised learning, where the input data are only known through (portions of) the kernel matrix. The main question we try to tackle is the following: Is it possible to run supervised learning methods with positive-definite kernels in time which is subquadratic in the number of observations without losing predictive performance? Of course, if adaptation is desired, linear complexity seems impossible, and therefore we should expect (hopefully slightly) super-linear algorithms. Statistically, a quantity that characterizes the non-parametric nature of kernel method is the \emph{degrees of freedom}, which play the role of an implicit number of parameters and which we  define and review in \mysec{df}. This quantity allows to go beyond worst-case analyses which are common in statistical learning theory: our generalization bounds will then depend on problem-dependent quantities which may not be known at training time, but that characterize finely the behavior on any given problem instance, and not only for the worst case over a large class of problems. In this paper, we try to tackle the following specific question: Do the degrees of freedom play a role in the computational properties of kernel methods?

An important feature of kernel matrices is that they are positive-semidefinite, and thus they may well be approximated from a random subset of $p$ of their columns, in running-time complexity $O(p^2n)$ and with a computable bound on the error (see details in \mysec{column}). This appears through different formulations within numerical linear algebra or machine learning, e.g., Nystr\"om method~\citep{williams2001using}, sparse greedy approximations~\citep{smola2000sparse}, incomplete Cholesky decomposition~\citep{fine01efficient,csi}, Gram-Schmidt orthonormalization~\citep{Cristianini2004} or CUR matrix decompositions \citep{mahoney2009cur}. It has been thoroughly analyzed in contexts where the goal is kernel matrix approximation or approximate eigenvalue decomposition~\cite[see, e.g.,][]{boutsidis2009improved,mahoney2009cur,ameet,gittens2011spectral}.
Such bounds have also been subsequently used to characterize the approximation of predictions made from these low-rank decompositions~\citep{cortes2010impact,jin}, but these two-stage analyses do not lead to guarantees that reflect the good observed practical behavior.
In this paper, our analysis aims at answering explicitly the simple question: how big should $p$ be to incur no loss of predictive performance compared to the full kernel matrix? The key insight of this paper is not to try to approximate the kernel matrix  well, but to predict well from the approximation. This requires a sharper analysis of the approximation properties of the column sampling approach.

\vspace*{.1cm}

We make the following contributions:
\begin{list}{\labelitemi}{\leftmargin=1.7em}
   \addtolength{\itemsep}{-.215\baselineskip}
\item[--]
In the fixed design least-squares regression setting, we show in \mysec{theorem} that the rank $p$ can be chosen to be linear in the \emph{degrees of freedom} associated with the problem, a quantity which is classically used in the statistical analysis of such methods. Note that our results hold for any problem instance, and not only in a worst-case regime.

\item[--] We present in \mysec{algorithms} simple algorithms that have sub-quadratic running time complexity, and, for the square loss, provably exhibit the same \emph{predictive performance} as classical algorithms than run in quadratic time (or more). 

\item[--] We provide  in \mysec{decay} explicit examples of optimal values of the regularization parameters and the resulting degrees of freedom, as functions of the decay of the eigenvalues of the kernel matrix, shedding some light in the joint computational/statistical trade-offs for choosing a good kernel. In particular, we show that with kernels with fast spectrum decays (such as the  Gaussian kernel), computational limitations may prevent exploring the relevant portions of the regularization paths, leading to underfitting.
\end{list}

\section{Supervised learning with positive-definite kernels}

In this section, we present the problem we try to solve, as well as several areas of the machine learning and statistics literatures our method relates to.

\subsection{Equivalent formulations}
\label{sec:eq}

Let $(x_i,y_i)$, $i=1,\dots,n$, be $n$ pairs of points in $\mathcal{X} \times \mathcal{Y}$, where $\mathcal{X}$ is the input space, and $\mathcal{Y}$ is the set of outputs/labels. In this paper,  we consider the problem of minimizing
\BEQ
\label{eq:primal}
\min_{f \in \mathcal{F}} \frac{1}{n} \sum_{i=1}^n \ell(y_i,f(x_i) ) + \frac{\lambda}{2} \| f\|^2,
\EEQ
where $\mathcal{F}$ is a reproducing kernel Hilbert space with feature map $\phi: \mathcal{X} \to \mathcal{F}$, and positive-definite kernel $k: \mathcal{X} \times \mathcal{X} \to \rb$.  While this problem is formulated as an optimization problem in a Hilbert space, it may be formulated as the optimization over $\rb^n$ in two different ways.

First, using the representer theorem, the unique solution $f$ may be found as $f = \sum_{i=1}^n \alpha_i \phi(x_i)$ \citep[see, e.g.,][]{wahba,smola-book,Cristianini2004}. Thus, by replacing the expression of $f$ in \eq{primal}, $\alpha$ is a solution of the following optimization problem:
\BEQ
\label{eq:primal-rep}
\min_{ \alpha \in \rb^n } \frac{1}{n} \sum_{i=1}^n \ell(y_i,(K \alpha)_i ) + \frac{\lambda}{2} \alpha^\top K \alpha,
\EEQ
 where $K \in \rb^{n \times n} $ is the \emph{kernel matrix}, defined as $K_{ij} = k(x_i,x_j)$.

Second, for convex losses only, an equivalent dual problem is classically obtained as (see proof in Appendix~\ref{app:dual}):
\BEQ
\label{eq:dual}
\max_{\alpha \in \rb^n} -g(-\lambda \alpha) - \frac{\lambda}{2} \alpha^\top K \alpha,
\EEQ
where $g(z) = \max_{ u \in \rb^n} 
 - \frac{1}{n} \sum_{i=1}^n \ell(y_i,u_i )  + u_i z_i
$ is the Fenchel-conjugate of the empirical risk (for the hinge loss, \eq{dual} is exactly the classical dual formulation of the SVM). Again, one may express the primal solution as 
$f = \sum_{i=1}^n \alpha_i \phi(x_i)$. In many situations (such as with the square loss or logistic loss), then the solution of \eq{dual} is unique, and it is also a solution of \eq{primal-rep} (note however that the converse is not true).

\subsection{Related work}

\paragraph{Efficient optimization algorithms for kernel methods.} In order to solve \eq{primal}, algorithms typically consider a primal or a dual approach.
Solving \eq{primal-rep}, i.e., the primal formulation after application of the representer theorem, is typically inefficient because the problem is ill-conditioned\footnote{The objective function in \eq{primal-rep} is a function of $K^{1/2} \alpha$, with a kernel matrix $K$ which is often ill-conditioned, usually leading to ill-conditioning of the original problem~\citep{chapelle2007training}.} and thus   second-order algorithms are typically used~\citep{chapelle2007training}. Alternatively, $K$ is represented explicitly as $K = \Phi \Phi^\top$ and a change of variable $ w = \Phi^\top \alpha$  is considered (note that when the kernel $k$ is linear, $\Phi$ is simply the design matrix, and we are solving directly a linear supervised learning problem). Then, the classical battery of convex optimization algorithms may be used, such as gradient descent, stochastic gradient descent~\citep{shalev2007pegasos} or cutting-planes~\citep{cutting}. However, in a kernel setting where a small matrix $\Phi$ (i.e., with few columns) is not known a priori, then they all exhibit at least quadratic complexity in $n$, as the full kernel matrix is used.

The dual problem in \eq{dual} is usually better-behaved (it has a better condition number)~\citep{chapelle2007training}, and algorithms such as coordinate descent and its variants such as sequential minimal optimization may be used~\citep{platt1999fast}. Again, in general, the full kernel matrix is needed.

Some algorithms operate online and do not need to compute the full kernel matrix, such as the ``forgetron''~\citep{dekel2005forgetron},  the ``projectron''~\citep{orabona2008projectron},   BGSD~\citep{wang2012breaking}, or LASVM~\citep{bordes2005fast}, with typically a fixed computational budget and good practical performance.
They often come with theoretical approximation guarantees, which are either data-dependent or based on worst-case analysis; however, these do not characterize the required rank which is needed to achieve the same accuracy than the problem with a full kernel matrix. In fact, one of the main motivations for this work is to derive precise bounds for reduced-set stochastic gradient algorithms for supervised kernel problems.

\paragraph{Analysis of column sampling approximation.}
Given a positive semi-definite matrix $K$ of size~$n$,   many methods exist for approximating it with a low-rank (typically also positive semidefinite) matrix $L$. While the optimal approximation is obtained from the eigenvalue decomposition, it is not computationally efficient as it has complexity at least quadratic in $n$ (since it requires the knowledge of $K$). In order to achieve linear complexity in $n$, approximations from subsets of columns are considered and appear under many names: Nystr\"om method~\citep{williams2001using}, sparse greedy approximations~\citep{smola2000sparse}, incomplete Cholesky decomposition~\citep{fine01efficient}, Gram-Schmidt orthonormalization or CUR matrix decompositions~\citep{mahoney2009cur}. Note that reduced-set methods~\citep[see, e.g.,][]{keerthi2006building} typically consider using a subset of columns after the predictor has been estimated. These low-rank methods are described in \mysec{column} and have running time complexity $O(p^2 n)$ for an approximation of rank $p$. Note that they may also be used in a Bayesian setting with Gaussian processes~\citep[see, e.g.,][]{lawrence2002fast}.

Column sampling has been analyzed a lot~\citep{mahoney2009cur,cortes2010impact,ameet,talwalkarmatrix,gittens2011spectral}; however, typically the analysis provides a high-probability bound on the error 
$\| K - L \|$ for an appropriate norm (typically operator, Frobenius or trace norm), but this is too pessimistic and does not really match with good practical performance (see empirical evidence in \myfig{pred}). Some works do consider prediction guarantees~\citep{cortes2010impact,jin}, but as shown in \mysec{theorem}, these are not sufficient to reach sharp results depending on the degrees of freedom. Moreover, many analyses consider situations where the  matrix $K$ is close to low-rank, which is not the case with kernel matrices. In this paper, the control of $K-L$ is more precise and adapted to the use of $K$ within a supervised learning method.

\paragraph{Randomized dimension reduction.}
The method presented in this paper, which considers random columns from the original kernel matrix, is also related to random projection techniques used for linear prediction problems~\citep{fot_mahoney,maillard2009compressed}. These techniques  are not kernel methods per se, as they require the knowledge of a matrix square root $\Phi$ (such that $K = \Phi \Phi^\top$), which leads to complexity greater than quadratic. For certain kernel functions that can be explicitly expressed as an expectation of dot-products in low-dimensional spaces, similar randomized dimensionality reduction may be performed~\citep{rahimi2007random}. Note however that the dimension reduction is then independent of the particular distributions of the input data points, while the column sampling approach is; see~\citet{jin_nys} for more discussion.

\paragraph{Theoretical analysis of predictive performance of kernel methods.}
In order to assess the required precision in approximating the kernel matrix, it is key to understand the typical predictive performance of kernel methods. For the square loss, this is classically obtained from a bias-variance decomposition of this performance (see
\mysec{analysis}). A key quantity is the \emph{degrees of freedom}, which play the role of an implicit number of parameters and is applicable to many non-parametric estimation methods which consists in ``smoothing'' the response vector by a linear operator~\citep[see, e.g.,][]{wahba,hastie_GAM,gu2002smoothing,Cap_DeV:2007,hsu2011analysis}. See precise definitions in \mysec{df}.

 \begin{figure}

 \vspace*{-.4cm}

\centering
 \includegraphics[scale=.6]{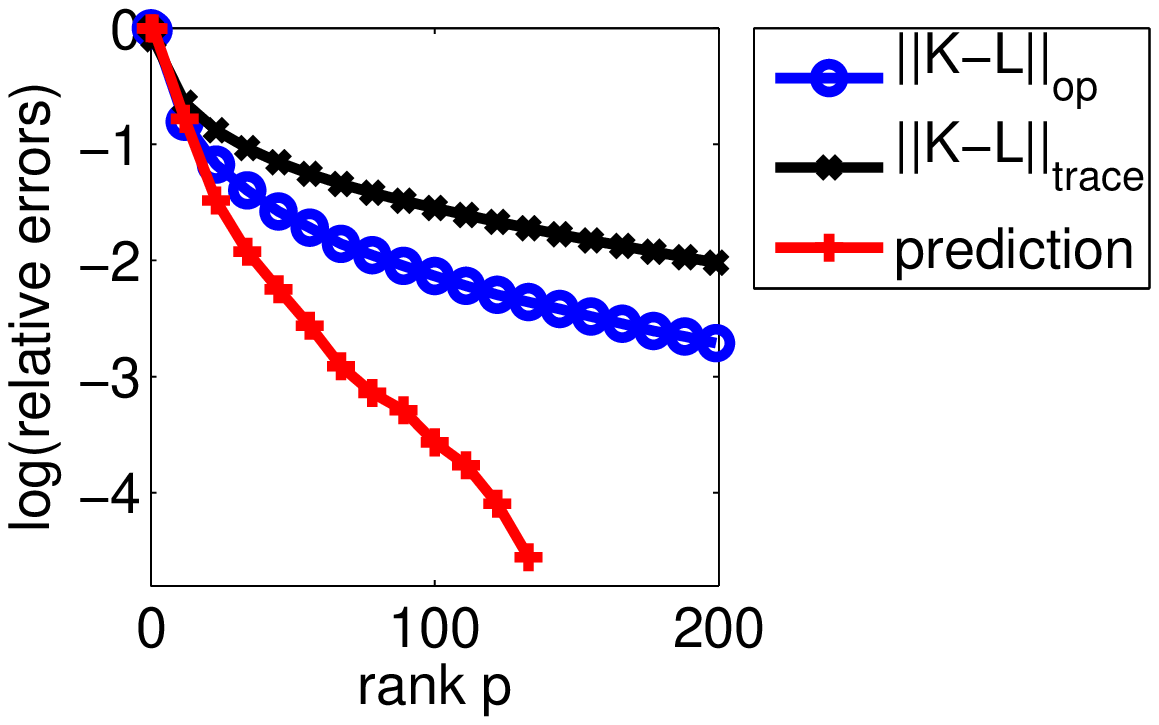} \hspace*{.5cm}
\includegraphics[scale=.6]{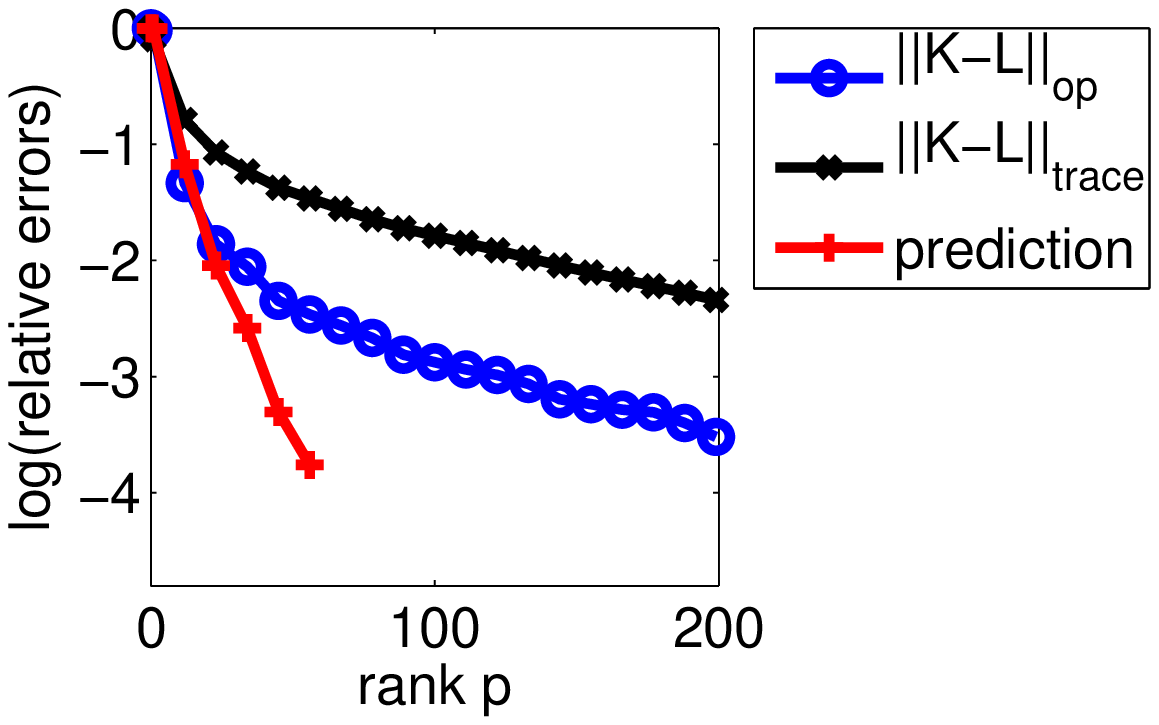} 
 
 \vspace*{-.4cm}

\caption{\textbf{Comparison of relative errors of kernel approximation and prediction performance.} We consider a least-squares prediction problem with $n=400$ and a decay of eigenvalues of the kernel matrix which is the inverse of a low-order polynomial (see examples in \mysec{simulations}). 
We compare the decays to zero of the relative kernel matrix approximation $\| K - L \|/ \|K\|$ (for the trace and operator norms) with the decay of the relative prediction performance (i.e., prediction for $L$ minus prediction for the full matrix $K$).
 Left: random selection of columns, right: selection of columns by incomplete Cholesky decomposition with column pivoting. The prediction error (red curve) stops when the average prediction error of the column sampling approach gets below the prediction error of the full kernel matrix approach.
 As the rank $p$ increases, the decay of the relative prediction error is much faster than the error in matrix approximation, suggesting that relying on good kernel matrix approximation may be suboptimal if the goal is simply to predict well.
}

 \vspace*{-.2cm}

\label{fig:pred}
\end{figure}

\section{Approximation from subset of columns}
\label{sec:column}

In this section, given subsets $A$ and $B$ of $V= \{1,\dots,n\}$ and a matrix $K \in \rb^{n \times n}$, we denote by $K(A,B)$ the submatrix of $K$ with rows in $A$ and columns in $B$.

\paragraph{Approximation from columns.}
Given a random subset $I$ of $V=\{1,\dots,n\}$ of cardinality $p$,  we simply consider the approximation of the kernel matrix $K$ from the knowledge of $K(V,I)$ (the columns of $K$ indexed by $I$), by the matrix 
\BEQ
\label{eq:L}
L = K(V,I) K(I,I)^\dagger K(I,V),
\EEQ
 where $M ^\dagger$ denotes the pseudo-inverse of $M$. As shown by~\citet{csi}, $L$ is the only symmetric matrix with column space spanned by the columns of $K(V,I)$, and such that $L(V,I) = K(V,I)$. Alternatively, given that $K$ is the matrix of dot-products of points in a Hilbert space, it may be seen as the kernel matrix of the  orthogonal projections of all points onto the affine subspace spanned by the points indexed by $I$~\citep{fot_mahoney}.

Note that approximating $K$ by $L$ also corresponds to creating an explicit feature map of dimension $p$, i.e., $\widetilde{\phi}(x) = K(I,I)^{-1/2}  ( k(x_i,x) )_{i \in I} \in \rb^p$, 
and, this allows the application to test data points (note that using such techniques also allows better \emph{testing} running time peformance).

Such a feature map may be efficiently obtained in running time $O(p^2n)$ using incomplete Cholesky decomposition---often interpreted as partial Gram-Schmidt orthonormalization, with the possibility of having an explicit online bound on the trace norm of the approximation error~\citep[see, e.g.,][]{Cristianini2004}.

\paragraph{Pivoting vs.~random sampling.} While selecting a random subset is computationally efficient, it may not lead to the best performance. For the task of approximating the kernel matrix, algorithms such as the incomplete Cholesky decomposition \emph{with pivoting}, provide an approximate greedy algorithm with the same complexity than random subsampling~\citep{smola2000sparse,fine01efficient}.

In \mysec{simulations}, we provide comparisons between the two approaches, showing the potential advantage of the greedy method over random subsampling. However, the analysis of such algorithms  is harder, and, to the best of our knowledge, still remains an open problem.

\section{Fixed design analysis for least-square regression (ridge regression)}

\label{sec:analysis}
To simplify the analysis,  we assume that the $n$  data points  $x_1,\dots,x_n$ are deterministic and that $\mathcal{Y} = \rb$.  In this setting, the classical \emph{generalization error} (prediction error on unseen data points) is replaced by the \emph{in-sample prediction error} (prediction error on observed data points). This fixed design assumption could be relaxed by using tools from~\citet{hsu2011analysis}, as results for random design settings are typically similar to the fixed design settings.

We assume that the loss $\ell$ is the square loss, i.e., $\ell(y_i,f(x_i)) = \frac{1}{2} ( y_i - f(x_i) )^2$. By using the representer theorem (see \mysec{eq}), we classically obtain:
$$
\textstyle f(x) = \sum_{i=1}^n \alpha_i k(x,x_i) \mbox{ with } \alpha = (K + n \lambda \idm)^{-1} y.
$$
This leads to a prediction vector $\hat{z} = K (K + n \lambda \idm)^{-1} y \in \rb^n$, which is a linear function of the output observations $y$, and is often referred to as a smoothed estimate of $z$.

\subsection{Analysis of the in-sample prediction error}

\label{sec:df}
We denote by $z_i = \E y_i \in \rb$ the expectation of $y_i$, and we denote by $\varepsilon_i = y_i - z_i = y_i - \E y_i \in \rb $  the noise variables; they have zero mean and are \emph{only} assumed to have finite covariance matrix $C$ (note that the noise may neither be independent nor identically distributed).

\paragraph{Bias/variance decomposition of the generalization error.}
Following classical results from the statistics literature~\citep[see, e.g.,][]{wahba,hastie_GAM,gu2002smoothing}, we obtain the following expected prediction error:
\BEAS
\textstyle \frac{1}{n} \E_\varepsilon   \| \hat{z} - z\|^2 & \!\!= \!\!& \textstyle
  \textstyle \frac{1}{n} \| \E_\varepsilon \hat{z} - z\|^2 +  \frac{1}{n} \tr {\rm var}_\varepsilon(\hat{z})  \\
  & \!\!= \!\!&  \textstyle \frac{1}{n} \| (\idm - K (K + n \lambda \idm)^{-1} ) z\|^2 +  \frac{1}{n}    \tr C
  K^2 (K + n \lambda \idm)^{-2} \\
   & \!\!= \!\! &\textstyle n  \lambda^2 z^\top (K + n \lambda \idm)^{-2} z  +  \frac{1}{n}    \tr C
  K^2 (K + n \lambda \idm)^{-2} ,
\EEAS
which may be classically decomposed in two terms:
\BEAS
{\rm bias}(K) & = &  n  \lambda^2 z^\top (K + n \lambda \idm)^{-2} z \\
{\rm variance}(K) & = &  \textstyle \frac{1}{n}  \tr C
  K^2 (K + n \lambda \idm)^{-2} .
\EEAS
Note that the bias term is a matrix-decreasing function of  $K/\lambda$ (and thus an increasing function of $\lambda$), while the variance term is a matrix-increasing function of $K/\lambda$ and the noise covariance  matrix~$C$.

\paragraph{Degrees of freedom.}
Note that an assumption which is usually made is $C = \sigma^2 \idm$;  the variance term then takes the form $\frac{\sigma^2}{n} \tr  
  K^2 (K + n \lambda \idm)^{-2}$ and $\tr  
  K^2 (K + n \lambda \idm)^{-2}$ is referred to as the \emph{degrees of freedom}~\citep{wahba,hastie_GAM,gu2002smoothing,hsu2011analysis} (note that an alternative definition is often used, i.e., $\tr  
  K (K + n \lambda \idm)^{-1}$, and that as shown in Appendix~\ref{app:lambda}, they often behave similarly). In   ordinary  least-squares estimation from $d$ variables, the variance term is equal to $\sigma^2 d/n$, and thus the degrees of freedom play the role of an implicit number of parameters. In this paper, we show that a proxy to this statistical quantity also plays a role in optimization: the number of columns needed to approximate the kernel matrix  precisely enough to incur no loss of performance is linear in the degrees of freedom.
  
  More precisely, we define the \emph{maximal marginal degrees of freedom} $d$ as
  \BEQ
  \label{eq:d}
  d = n   \big\|
\diag \big( K ( K + n \lambda \idm)^{-1}   \big) \big\|_\infty.
\EEQ
  We have $\tr  K^2 ( K + n \lambda \idm)^{-2}
  \leqslant \tr  K ( K + n \lambda \idm)^{-1} =  \big\|   
\diag \big( K ( K + n \lambda \idm)^{-1}   \big) \big\|_1 \leqslant d$, and thus~$d$ provides an upper-bound on the regular degrees of freedom. It may be significantly larger in situations where there may be outliers and the vector $\diag \big( K ( K + n \lambda \idm)^{-1}   \big) $ is far from uniform---precise results are out of the scope of this paper.
  Moreover, the diagonal elements of $   K ( K + n \lambda \idm)^{-1} $ are related to statistical leverage scores introduced for best-rank approximations~\citep{mahoney2009cur}; it would be interesting to see if this link could lead to non-uniform sampling schemes with better behavior.
  
  In \mysec{decay}, we study in detail how the degrees of freedom vary as a function of  $\lambda$ and $n$: in order to minimize predictive performance, the best choice of $\lambda$ depends on $n$ (as a decreasing function), typically smaller than a constant times $1/\sqrt{n}$, and the degrees of freedom typically grow as a slow function of $n$, reflecting the non-parametric nature of kernel methods.

 \subsection{Predictive performance of column sampling}
 \label{sec:theorem}

 We consider sampling $p$ columns  (without replacement) from the original $n$ columns. We consider the column sampling approximation defined in \eq{L} and provide sufficient conditions (a lower-bound on  $p$) to obtain the same predictive performance than  with the full kernel matrix.
 \begin{theorem}[Generalization performance of column sampling]
 \label{theo:perf}
 Assume $z \in \rb^n$ and $K \in \rb^{n \times n}$ are respectively a deterministic vector and a symmetric positive semi-definite matrix, and $\lambda>0$. Let $d =
 n   \big\|
\diag \big( K ( K + n \lambda \idm)^{-1}   \big) \big\|_\infty$ and $R^2 = \| \diag(K) \|_\infty$. Assume $\varepsilon \in \rb^n$ is a random vector with finite variance and zero mean, and define the smoothed estimate
 $\hat{z}_K = (K+ n\lambda \idm)^{-1}K(z+\varepsilon)$. Assume that $I$ is a uniform random subset of $p$ indices in $\{1,\dots,n\}$ and consider $L = K(V,I) K(I,I)^\dagger  K(I,V)$, with the approximate smoothed estimate $\hat{z}_L = (L+ n\lambda \idm)^{-1}L(z+\varepsilon)$. Let $\delta \in (0,1)$. If 
 \BEQ
 \label{eq:rank}
 p \geqslant \big( \frac{32 d }{\delta} + 2 \big) \log \frac{n R^2}{\delta \lambda},
 \EEQ
 then
 \BEQ
 \label{eq:bound}
 \frac{1}{n} \E_I \E_\varepsilon  \| \hat{z}_L - z\|^2
 \leqslant   ( 1 + 4 \delta)  \frac{1}{n}\E_\varepsilon  \| \hat{z}_K - z\|^2.
 \EEQ
 \end{theorem}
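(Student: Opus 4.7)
The plan is to reduce the theorem to a matrix-concentration bound on the \emph{preconditioned deviation}
\[
\Delta := (K+n\lambda\idm)^{-1/2}(K-L)(K+n\lambda\idm)^{-1/2},
\]
and to handle each piece separately. Writing $K = \Phi^\top\Phi$ for some feature matrix $\Phi$ and $L = \Phi^\top P_I \Phi$ with $P_I$ the orthogonal projection in feature space onto $\mathrm{span}\{\phi_j : j \in I\}$ (as explained in Section~\ref{sec:column}), we immediately get $0 \preceq L \preceq K$ and hence $\Delta \succeq 0$. The deterministic step is then: if $\|\Delta\|_{\mathrm{op}} \leq \eta < 1$, then $(1-\eta)(K+n\lambda\idm) \preceq L+n\lambda\idm \preceq K+n\lambda\idm$, and combined with the resolvent identity $(L+n\lambda\idm)^{-1} - (K+n\lambda\idm)^{-1} = (L+n\lambda\idm)^{-1}(K-L)(K+n\lambda\idm)^{-1}$ this expresses $\hat{z}_L - \hat{z}_K$ as a product of $\Delta$ with an operator of controlled norm. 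Applying $\|u+v\|^2 \leq (1+\eta)\|u\|^2 + (1+\eta^{-1})\|v\|^2$ to $\hat{z}_L - z = (\hat{z}_L-\hat{z}_K) + (\hat{z}_K-z)$ and then taking $\E_\varepsilon$, while using the bias/variance decomposition of $\hat{z}_K$ from Section~\ref{sec:df}, yields a pointwise-in-$I$ bound
$\E_\varepsilon \|\hat{z}_L - z\|^2 \leq (1 + C\eta)\,\E_\varepsilon\|\hat{z}_K - z\|^2$
whenever $\|\Delta\|_{\mathrm{op}} \leq \eta$ is small enough.

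The probabilistic step produces the required bound on $\|\Delta\|_{\mathrm{op}}$. Using the push-through identity $\Phi(K+n\lambda\idm)^{-1/2} = (\Sigma+n\lambda\idm)^{-1/2}\Phi$ with $\Sigma = \Phi\Phi^\top$, we rewrite $\Delta = A^\top(\idm - P_I)A$, where $A := \Phi(K+n\lambda\idm)^{-1/2}$. The $i$th column satisfies $\|A e_i\|^2 = \phi_i^\top(\Sigma+n\lambda\idm)^{-1}\phi_i = [K(K+n\lambda\idm)^{-1}]_{ii} \leq d/n$, and $\|\phi_i\|^2 \leq R^2$. A matrix Chernoff / Bernstein inequality for sampling $p$ indices uniformly without replacement then gives $\|\Delta\|_{\mathrm{op}} \leq \delta$ with probability at least $1-\delta$ as soon as $p$ satisfies \eq{rank}: the linear dependence on $d$ is the uniform ridge-leverage-score factor, and the $\log(n R^2/\delta\lambda)$ term is the standard effective-dimension factor appearing in matrix Chernoff. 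To convert the high-probability bound into the expectation bound~\eq{bound}, split $\E_I$ over the good event $\{\|\Delta\|_{\mathrm{op}}\leq\delta\}$ and its complement: on the good event use the deterministic step to get $(1 + C\delta)\,\E_\varepsilon\|\hat{z}_K - z\|^2$, and on the complement use the elementary contraction $\|L(L+n\lambda\idm)^{-1}\|_{\mathrm{op}}\leq 1$ to produce an a priori bound on $\hat{z}_L$, absorbing the resulting contribution into an additional $O(\delta)$ term and fixing the constants so the multiplier is $(1+4\delta)$.

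The main obstacle is the deterministic perturbation step, because the bias $n\lambda^2 z^\top(L+n\lambda\idm)^{-2}z$ and the variance $\frac{1}{n}\tr[C\,L^2(L+n\lambda\idm)^{-2}]$ involve \emph{squares} of $(L+n\lambda\idm)^{-1}$, and $A\preceq B$ does not imply $A^2\preceq B^2$ for general positive operators; so one cannot just sandwich the smoother matrix of $L$ between multiples of that of $K$. The argument therefore has to work at the level of quadratic forms, writing $\|(L+n\lambda\idm)^{-1}v\|^2 = v^\top(L+n\lambda\idm)^{-1/2}(L+n\lambda\idm)^{-1}(L+n\lambda\idm)^{-1/2}v$ and peeling off one factor at a time using only the operator monotonicity of the square root, while tracking the constants so that they combine cleanly into the final $(1+4\delta)$ multiplier rather than blowing up.
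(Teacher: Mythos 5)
Your high-level plan---precondition by $(K+n\lambda \idm)^{-1/2}$, control column norms by the marginal degrees of freedom $d/n$, apply a matrix Bernstein inequality for sampling without replacement, and split $\E_I$ over a good and a bad event---matches the skeleton of the paper's argument. But two of your steps do not go through as stated, and both are resolved in the paper by the one device your proposal lacks: \emph{regularizing the Nystr\"om projection}. For the probabilistic step, you write $\Delta = A^\top(\idm-P_I)A$ and invoke matrix Chernoff/Bernstein, but $P_I$ is the orthogonal projection onto ${\rm span}\{\phi_j : j \in I\}$, a nonlinear, non-additive function of the sampled set; $\Delta$ is therefore not of the form ``empirical average minus expectation,'' and no matrix concentration inequality applies to it directly. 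The paper replaces $P_I$ by $N_\gamma = \Phi_I^\top(\Phi_I\Phi_I^\top + p\gamma \idm)^{-1}\Phi_I \preccurlyeq P_I$, for which $\idm - N_\gamma = \gamma(\frac{1}{p}\Phi_I^\top\Phi_I + \gamma \idm)^{-1}$ \emph{is} an explicit function of the empirical second moment $\frac{1}{p}\Phi_I^\top\Phi_I$; Lemma~\ref{lemma:concentration} is then applied to $\frac{1}{n}\Psi^\top\Psi - \frac{1}{p}\Psi_I^\top\Psi_I$ with $\Psi = \Phi(\frac{1}{n}\Phi^\top\Phi+\gamma\idm)^{-1/2}$, and the resulting bound at $\gamma = \lambda\delta/4 > 0$ is transferred to $\gamma = 0$ because the bias is non-decreasing in $\gamma$. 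This extra parameter, chosen a factor $\delta/4$ below $\lambda$, is also what lets the constants close to $(1+4\delta)$; your argument has no analogue of it.

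The deterministic step has a second gap. The inequality $\|u+v\|^2 \leqslant (1+\eta)\|u\|^2 + (1+\eta^{-1})\|v\|^2$ only helps if $\E_\varepsilon\|\hat{z}_L - \hat{z}_K\|^2 \lesssim \eta^2\, \E_\varepsilon\|\hat{z}_K - z\|^2$, and this fails for the noise part: writing $M_K = K(K+n\lambda \idm)^{-1}$ and $M_L = L(L+n\lambda \idm)^{-1}$, the quantity $\tr C(M_K-M_L)^2$ is not controlled by $\eta^2 \tr C M_K^2$ under $\|\Delta\|_{\rm op}\leqslant\eta$ alone, because $M_K - M_L$ can be of order $\eta$ precisely in the many directions where $M_K$ is negligible; after multiplication by $(1+\eta^{-1})$ this contributes a term comparable to (or larger than) the variance itself, yielding at best a constant-factor bound rather than $1+O(\eta)$. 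The paper never compares $\hat{z}_L$ to $\hat{z}_K$ as vectors: it compares variance to variance and bias to bias, observes that $N_\gamma \preccurlyeq \idm$ forces ${\rm variance}(L_\gamma) \leqslant {\rm variance}(K)$ (the approximation can only shrink the variance), and reserves the concentration argument for the bias alone, a single quadratic form $n\lambda^2 z^\top(L_\gamma + n\lambda \idm)^{-2}z$ handled by sandwiching $(L_\gamma+n\lambda\idm)^{-1}$ by a multiple of $(K+n\lambda\idm)^{-1}$ via $K - L_\gamma \preccurlyeq \frac{n\gamma}{1-t}\idm$. Your closing remark correctly identifies the ``squares of resolvents'' obstruction, but the resolution is not to peel square roots off your triangle-inequality bound---it is to abandon that decomposition and treat bias and variance by different mechanisms.
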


\begin{proofsketch}
The proof relies on approximating the expected error \emph{directly}, and not through bounding the error $\|K \!-\!L \|$. This is done by (a) considering a regularized version of $L$, i.e., $L_\gamma = K(V,I)  \big[ K(I,I) + p \gamma \idm\big]^{-1} K(I,V)$, (b) using a Bernstein inequality for an appropriately rescaled covariance matrix and (c) using monotonicity arguments to obtain the required bound. See more details in Appendix~\ref{app:proof}.
\end{proofsketch}

 \paragraph{Sharp relative approximation.}
The bound in \eq{bound} provides a relative approximation guarantee: the predictions $\hat{z}_L$ are shown to perform as well as $\hat{z}_K$ (no kernel matrix approximation). Small values of $\delta$ impose no loss of performance, while $\delta=1/4$ impose that the prediction errors have a similar behavior (up to a factor of $2$). Note that relative bounds may be more easily obtained by eigenvalue thresholding of the kernel matrix, i.e., through replacing soft-shrinkage by hard-thresholding~\citep{blanchard2004kernel,dhillon2011risk}. However, these bounds do not allow the proportionality constant to go arbitrarily close to one, and they depend on the full knowledge of the kernel matrix.

\paragraph{Lower bounds.}
The lower bound for the rank $p$ in \eq{rank} shows that the maximal marginal degrees of freedom provides a quantity which, up to logarithmic terms, is sufficient to scale with, in order to incur no loss of  prediction performance. Note that the previous result also allows the derivation of an approximation guarantee $\delta$ given a rank $p$, by inverting \eq{rank}. Moreover, Theorem~\ref{theo:perf} provides a sufficient lower-bound for the required rank $p$. Deriving precise necessary lower-bounds is outside the scope of this paper. However, given that with a reduced space of $p$ dimensions, we can achieve a prediction error of $O(p/n)$ from ordinary least-squares, we should expect $p$ to be larger than the known minimax rates of estimation for the problem at hand~\citep{johnstone1994minimax,Cap_DeV:2007,steinwart2009optimal}. In \mysec{decay}, we show that in some situations, it turns out that $d$ is of the order of the minimax rate; therefore, we could expect that in certain settings, $d$ is also a necessary lower-bound on $p$ (up to constants and logarithms).

\paragraph{High-probability results.}
The bound in \eq{bound} provides a result in expectation, both with respect to the data (i.e., $\E_\varepsilon$) and the sampling of columns (i.e., $\E_I$). While results in high probability with respect to $I$ are readily obtained since the proof is based on such results (see \eq{high} in Appendix~\ref{app:proof} for details), doing so with respect to $\varepsilon$  would require additional assumptions, which are standard in the analysis in ridge regression~\citep{hsu2011analysis,arlot}, but that would make the results significantly more complicated. 
 
 \paragraph{Avoiding terms in $1/\lambda$.}
Theorem~\ref{theo:perf} focuses on average predictive performance; this is different 
from achieving a good approximation of the kernel matrix~\citep{mahoney2009cur}. 
 Previous work~\citep{cortes2010impact,jin} considers explicitly the use of kernel matrix approximation bounds within classifiers or regressors, but obtains  bounds that involve  multiplicative terms
 of the form $1/\lambda$ or $1/\lambda^2$, which, as we show in \mysec{decay}, would grow as $n$ grows. More precisely, 
the bound from~\citet[Eq.~(5)]{cortes2010impact}
has a term of  the form $1/\lambda_{\min}(K+\lambda \idm)^2$; however, for the non-parametric problems we are considering in this paper, the lowest eigenvalue of $K$ is often below machine precision and hence the bound behaves as $1/\lambda^2$. Moreover, the
subsequent bound of~\citet[][Theorem 2]{cortes2010impact} has a term of the form $1/(\lambda^2 p)$ which can only be small if $p$ is   larger than $1/\lambda^2$, which, according to our analysis in \mysec{decay}, is typically larger than $n$ since $\lambda$ should typically decrease at least as $1/\sqrt{n}$ (however, note that their bound has a stronger nature than the one in Theorem~\ref{theo:perf}, as it states a guarantee in high probability).

Our proof technique, that focuses \emph{directly} on prediction performance
and side-steps the explicit approximation of the kernel matrix, avoids these terms in $1/\lambda$, and, beyond the dependence on $\lambda$ through the degrees of freedom (which we cannot avoid),  our dependence is only logarithmic in $\lambda$ (see details in the proof in Appendix~\ref{app:proof}). 

\paragraph{Instance-based guarantees.}
Theorem~\ref{theo:perf} shows that in the specific instance that we are faced with, we do not lose any average predictive performance. 
As opposed to~\citet{jin}, the bound is not on the worst-case predictive performance (obtained from optimizing over $\lambda$, and with worst-case analysis over $K$), but for  given $\lambda$ and $K$ (however, the bound of~\citet{jin} is a high-probability result while ours is only in expectation).
Moreover, even in this worst-case regime,~\citet{jin} state that for polynomial decays of the eigenvalues of $K$ such that the optimal prediction performance is of the form $O( n^{-1} n^{1/(\gamma+1)})$ (and for binary classification rather than regression), the rank to achieve this optimal prediction is $p=n^{2\gamma/(\gamma^2-1)}$, which  may be larger than $n$ and is significantly higher than
$ n^{1/(\gamma+1)} $ (which corresponds to our result since the degrees of freedom are then equal to $ n^{1/(\gamma+1)} $).  

\paragraph{Link with eigenvalues.}
In the existing analysis of sampling techniques for kernel methods, another source of inefficiency which makes our result sharper is the   proof technique for bounding $\| K - L \|$. Indeed, most analyses use  a linear algebra lemma from~\citet{mahoney2009cur,boutsidis2009improved}, that relies on the $(p+1)$-th eigenvalue to be small; hence it is adapted to matrices with sharp eigenvalue decrease, which is not the case for kernel matrices (see an illustrative example in \myfig{pred}). We provide a new proof technique based on regularizing the column sampling approximation and optimizing the extra regularization parameter using a monotonicity argument.

\paragraph{Additional regularization effect.}
In our experiments, we have noticed that the low-rank approximation may have an additional regularizing effect leading to a \emph{better} prediction performance than with the full kernel matrix.

\paragraph{Beyond square loss.}  The notion of degrees of freedom can be extended to smooth losses like the logistic loss. However, the simple bias/variance decomposition only holds asymptotically, forcing a control of the two terms, which would lead to significant added complexity. 

 \paragraph{Beyond fixed design.} In order to extend our analysis to random design settings, we would need to additionally control the deviation between  covariance operators and  empirical covariance operators, with quantities like the degrees of freedom that depend on the decay of non-zero eigenvalues and not on their number. This could be done using tools from~\cite{hsu2011analysis,hsu2}.

\subsection{Optimal choice of the regularization parameter}
\label{sec:decay}

As seen in Sections~\ref{sec:df} and \ref{sec:theorem}, the computational and statistical properties of kernel ridge regression depend heavily on the choice of the regularization parameter $\lambda$ as $n$ increases, which we now tackle.

For simplicity, in this section, we assume that the noise variables $\varepsilon$ are i.i.d. (i.e., $C = \sigma^2 \idm$). Our goal is to study simplified situations, where we can derive explicit formulas for the bias, the variance, and the optimal regularization parameter. Throughout this section, we will consider specific decays of certain sequences, which we characterize with the notation $u_n = \Theta(v_n)$, which means that there exist strictly positive constants $A$ and $B$ such that $A u_n \leqslant v_n \leqslant B v_n$ for all $n$.

We assume that the kernel matrix $K$ has eigenvalues of the form $\Theta(n \mu_i)$, $i=1,\dots,n$, for some summable sequence $(\mu_i)$---so that   $\tr K = \Theta (n)$, and that the coordinates of $z$ on the eigenbasis of~$K$ have the asymptotic behavior $ \Theta( \sqrt{n \nu_i})$ for a summable sequence $(\nu_i)$---so that $\frac{1}{n} z^\top z = \Theta(1)$. In Table~\ref{tab:decays}, we provide asymptotic equivalents of all quantities for several pairs of sequences $(\mu_i)$ and $(\nu_i)$ (see proofs in Appendix~\ref{app:lambda}), with polynomial or exponential decays. 

Note that for decays of $\nu_i$ which are polynomial, i.e., $\nu_i = O(i^{-2\delta})$, then the best possible prediction performance is known to be $O(n^{1/2\delta-1})$~\citep{johnstone1994minimax,Cap_DeV:2007} and is achieved if the RKHS is large enough (lines 2 and 4 in Table~\ref{tab:decays}). For exponential decay, the best performance   is $O( \log n/n)$. See also~\citet{steinwart2009optimal}.

Given a specific decay $(\nu_i)$ for expected outputs $z = \E y$, then depending on the decay $(\mu_i)$ of the eigenvalues of the kernel matrix, the final prediction performance and the optimal regularization parameter may be different. Usually, the smaller the  RKHS, the faster the decay of eigenvalues of the kernel matrix $K$---this is true for translation-invariant kernels~\citep{smola-book}, and the kernels considered in \mysec{sobolev}. Thus there are two regimes:

\begin{list}{\labelitemi}{\leftmargin=1.7em}
   \addtolength{\itemsep}{-.215\baselineskip}
\item[--] \textbf{The RKHS is too large}, for lines 1 and 3 in Table~\ref{tab:decays}: the eigenvalues of $K$, which depend linearly on $\mu_i$, do not decay fast enough. In other words, the functions in the RKHS are not smooth enough. In this situation, the prediction performance is suboptimal (do not attain the best possible rate).

\item[--] \textbf{The RKHS is too small}, for lines 2, 4, and 6 in Table~\ref{tab:decays}: the eigenvalues of $K$ decay fast enough to get an optimal prediction performance. In other words, the functions in the RKHS are potentially smoother than what is necessary. In this situation however, the required value of~$\lambda$ may be very small (much smaller than $O(n^{-1})$), leading to potentially harder optimization problems (since the condition number that  depends on $1/\lambda$ may be very large).
\end{list}

There is thus a computational/statistical trade-off: if the RKHS is chosen too large, then the prediction performance is suboptimal (i.e., even with the best possible regularization parameter, the resulting error is not optimal); if the RKHS is chosen too small, the prediction performance could be optimal, but the optimization problems are harder, and sometimes cannot be solved with the classical precision of numerical techniques (see examples of such behavior in \mysec{simulations}). Indeed,  for least-squares regression, where a positive semi-definite linear system has to be solved, its condition number is proportional to $1/\lambda$ and for the best choice of $\lambda$ in line 4 of 
Table~\ref{tab:decays}, it grows exponentially fast in $n$ (if $\lambda$ is chosen larger, then the bias term will lead to sub-optimal behavior).

\begin{table}

 \vspace*{-.4cm}

\centering
\hspace*{-.5cm}
\begin{tabular}{|l|l|l|l|l|l|l|l|}
\hline
$(\mu_i)$& $(\nu_i)$ & var.& bias & optimal $\lambda$ & pred. perf. &  d.f. $d_{ave}$ & condition    \\[.05cm]
\hline
\\[-.7cm]
& & & & &  &  & \\
$  i^{-2\beta} $  & $ i^{-2\delta} $  & $  n^{-1} \lambda^{-1/2\beta}$ & $  \lambda^2$  &  
$ n^{-1/(2+ 1/2\beta)}$ & $n^{1/(4 \beta+ 1)-1}$ & $n^{1/(4 \beta+ 1)}$  &
if $2\delta \!> \!4 \beta \!+\! 1$\\[.1cm]
$  i^{-2\beta} $  & $ i^{-2\delta} $  & $  n^{-1}\lambda^{-1/2\beta}$ & $ \lambda^{(2 \delta -1)/2 \beta}$  & $n^{-\beta/\delta}$ &  $n^{1/(2 \delta) - 1}$ &  $n^{1/(2 \delta) }$  &
if $2\delta \! <\! 4 \beta\! +\! 1$\\[.1cm]
$   i^{-2\beta} $  & $e^{-\kappa i}$  & $ n^{-1}\lambda^{-1/2\beta}$ & $  \lambda^2 $ & $ n^{-1/(2+ 1/2\beta)}$ & $n^{1/(4 \beta+ 1)-1}$ & $n^{1/(4 \beta+ 1)}$ &  \\[.1cm]
$ e^{-\rho i} $  & $  i^{-2\delta} $  &   $n^{-1}\log \frac{1}{\lambda}$  & $  ( \log \frac{1}{\lambda})^{1-2\delta}$&$ \exp( - n^{1/(2\delta)} )$  &$n^{1/(2 \delta) - 1}$  &$n^{1/(2 \delta) }$   & \\[.1cm]
$  e^{-\rho i} $  & $e^{-\kappa i}$  & $n^{-1}\log \frac{1}{\lambda}$ &  $ \lambda^2$  &  $n^{-1/2}$& $\log n/n$   & $\log n$   & if $\kappa > 2\rho$ \\[.1cm]
$  e^{-\rho i} $  & $e^{-\kappa i}$  & $n^{-1}\log \frac{1}{\lambda}$ & $\lambda^{\kappa/\rho} $ & $n^{-\rho/\kappa}$ & $\log n /n $ & $\log n  $  & if $\kappa <2 \rho$ \\[.1cm]
\hline
\end{tabular}

 \vspace*{-.1cm}

\caption{
Variance, bias, optimal regularization parameter, corresponding prediction performance and degrees of freedom $d_{ave} = \tr K^2 ( K + n\lambda \idm)^{-2}$, for several decays of eigenvalues and signal coefficients (we always assume $\delta>1/2$, $\beta>1/2$, $\rho>0$, $\kappa>0$, to make the series summable). All entries are functions of $i$, $n$ or $\lambda$ and are only asymptotically bounded below and above, i.e., correspond to the asymptotic notation $\Theta(\cdot)$.  }
\label{tab:decays}

 \vspace*{-.2cm}

\end{table}

\subsection{Optimization algorithms with column sampling}
\label{sec:algorithms}
 
Given a rank $p$ and a regularization parameter $\lambda$, we consider the following algorithm to solve \eq{primal} for twice differentiable convex losses: 
\begin{list}{\labelitemi}{\leftmargin=1.7em}

  \addtolength{\itemsep}{-.215\baselineskip}
\item[1.] Select at random $p$ columns of $K$ (without replacement).
\item[2.] Compute $\Phi \in \rb^{n \times p}$ such that $\Phi \Phi^\top = K(V,I) K(I,I)^\dagger K(I,V)$ using incomplete Cholesky decomposition~\citep[see details in][]{Cristianini2004}.
\item[3.] Minimize $\min_{w \in \rb^p } \frac{1}{n} \sum_{i=1}^n \ell(y_i, (\Phi w)_i) + \frac{\lambda}{2} \| w\|^2$ using Newton's method (i.e., a single linear system for the square loss).
\end{list}
The complexity of step 2 is already $O(p^2 n)$, therefore using faster techniques for step 3 (e.g., accelerated gradient descent) does not change the overall complexity, which is thus $O(p^2n)$. Moreover, since we use a second-order method for step 3, we are robust to ill-conditioning and in particular to small values of $\lambda$ (though not below machine precision as seen in \mysec{simulations}). This is not the case for algorithms that relies on the strong convexity of the objective function, whose convergence is much slower when $\lambda$ is small (as seen in \mysec{decay}, when $n$ grows, the optimal value of $\lambda$ can decay very rapidly, making these traditional methods non robust).

According to Theorem~\ref{theo:perf}, at least for the square loss, the dimension $p$ may be chosen to be linear in the maximal marginal degrees of freedom $d$ defined in \eq{d}; it  is in practice close to the traditional degrees of freedom $d_{ave}$, which, as illustrated in Table~\ref{tab:decays}, is typically smaller than $n^{1/2}$. Therefore, if $p$ is properly chosen, the complexity is subquadratic. Given $\lambda$, $d$ (and thus $p$) can be estimated from a low-rank approximation of $K$. However, our current analysis assumes that $\lambda$ is given. Selecting the rank $p$ \emph{and} the regularization parameter $\lambda$ in a data-driven way would make the prediction method more robust, but this would require extra assumptions~\citep[see, e.g.,][and references therein]{arlot}.

\section{Simulations}
\label{sec:simulations}
\label{sec:experiments}

\label{sec:sobolev}

\paragraph{Synthetic examples.}
In order to study various behaviors of the regularization parameters $\lambda$ and the degrees of freedom $d$, we consider periodic smoothing splines on $[0,1]$ and points $x_1,\dots,x_n$   uniformly spread over $[0,1]$, either deterministically or randomly. In order to generate problems with given sequences $(\mu_i)$ and $(\nu_i)$, it suffices to choose
$k(x,y) = \sum_{i=1}^\infty 2 \mu_i \cos 2 i\pi (x-y)$, and a function $f(x)
= \sum_{i=1}^\infty 2 \nu_i^{1/2} \cos 2 i\pi x $. For $\mu_i = i^{-2\beta}$, we have $k(x,y) = \frac{1}{(2\beta)!}
B_{2 \beta}(x-y - \lfloor x - y\rfloor) $,
 where
 $B_{2\beta}$ is the $(2\beta)$-th Bernoulli polynomial (see details in Appendix~\ref{app:kernels}).

\paragraph{Optimal values of $\lambda$.} In a first experiment, we illustrate the results from \mysec{decay}, and compute in \myfig{rkhs} the best value of the regularization parameter (left) and the obtained predictive performance (middle), for a problem with $\nu_i = i^{-2 \delta}$ for $\delta=8$, and for which we considered several kernels, for which $\mu_i = i^{-2 \beta}$, for $\beta = 1$,   $\beta = 4$ and   $\beta = 8$. \begin{list}{\labelitemi}{\leftmargin=1.7em}
   \addtolength{\itemsep}{-.215\baselineskip}
\item[--] For $\beta=1$, the  rate of convergence of $n^{1/(4 \beta+ 1)-1}$ happens to be    achieved (line 1 in Table~\ref{tab:decays}), with a certain asymptotic decay of the regularization parameter, and it is slower than  $n^{1/(2\delta)-1}$.
\item[--] For $\beta=4$, the optimal rate  of $n^{1/(2\delta)-1}$ is achieved (line 2 in Table~\ref{tab:decays}), as expected.
\item[--] For $\beta = 8$, the rate of convergence should be $n^{1/(2\delta)-1}$ (line 2 in Table~\ref{tab:decays}), however, as seen in the left plot, the regularization parameter saturates as $n$ grows at the machine precision, leading, because of numerical errors, to worse prediction performance. The problem is so ill-conditioned that the matrix inversion cannot be algorithmically robust enough.

\end{list}

\paragraph{Performance of low-rank approximations.}
In this series of experiments, we compute the rank~$p$ which is necessary to achieve a predictive performance at most $1\%$ worse than with $p=n$, and compute\footnote{Note that in practice, computing the degrees of freedom exactly requires to know the full matrix. However, it could also be  approximated efficiently,
following for example~\citet{fastdri}.}
 the ratio with the marginal degrees of freedom $d = n   \big\|
\diag \big( K ( K + n \lambda \idm)^{-1}   \big) \big\|_\infty$ and the traditional degrees of freedom $d_{ave} = \tr K^2 (K+n\lambda \idm)^{-2}$. In the right plot of \myfig{rkhs}, we consider data  randomly distributed in $[0,1]$ with the same kernels and functions than above, while in \myfig{m}, we considered three of the \emph{pumadyn} datasets from the UCI machine learning repository (there we compute the classical generalization performance on unseen data points and estimate $\lambda$ by cross-validation). For further experimental evaluations, see~\citet{cortes2010impact,talwalkarmatrix}.

On all datasets, the ratios stay relatively close to one, illustrating the results from Theorem~\ref{theo:perf}. Moreover, the two different versions $d$ and $d_{ave}$ of degrees of freedom are within a factor of 2. Moreover, using pivoting to select the columns does not change significantly the results, but may sometimes reduce the number of required columns by a constant factor. Note that the sudden increase (of magnitude less than 2 in the middle and right plots of \myfig{m}) is due to the chosen criterion (ratio of the sufficient rank to obtain 1\% worse predictive performance), which may be unstable.

\begin{figure}[ht]

 \vspace*{-.4cm}

\centering
\hspace*{-.75cm}
\includegraphics[scale=.5]{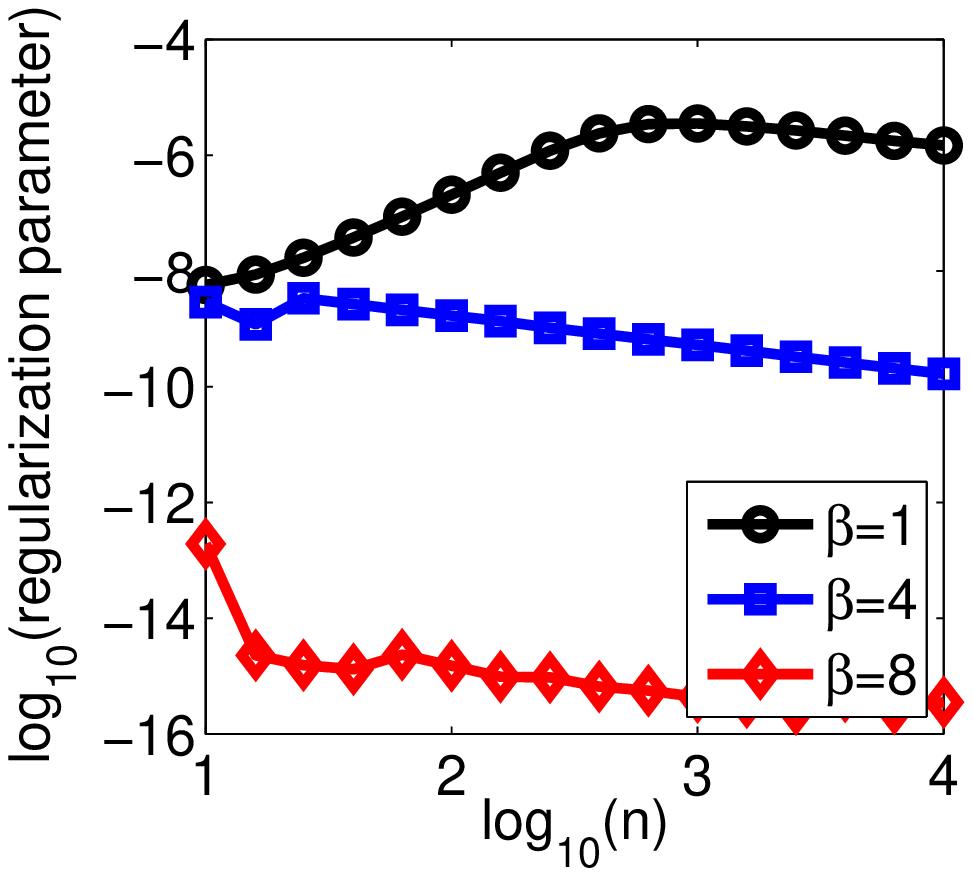} \hspace*{.25cm}
\includegraphics[scale=.5]{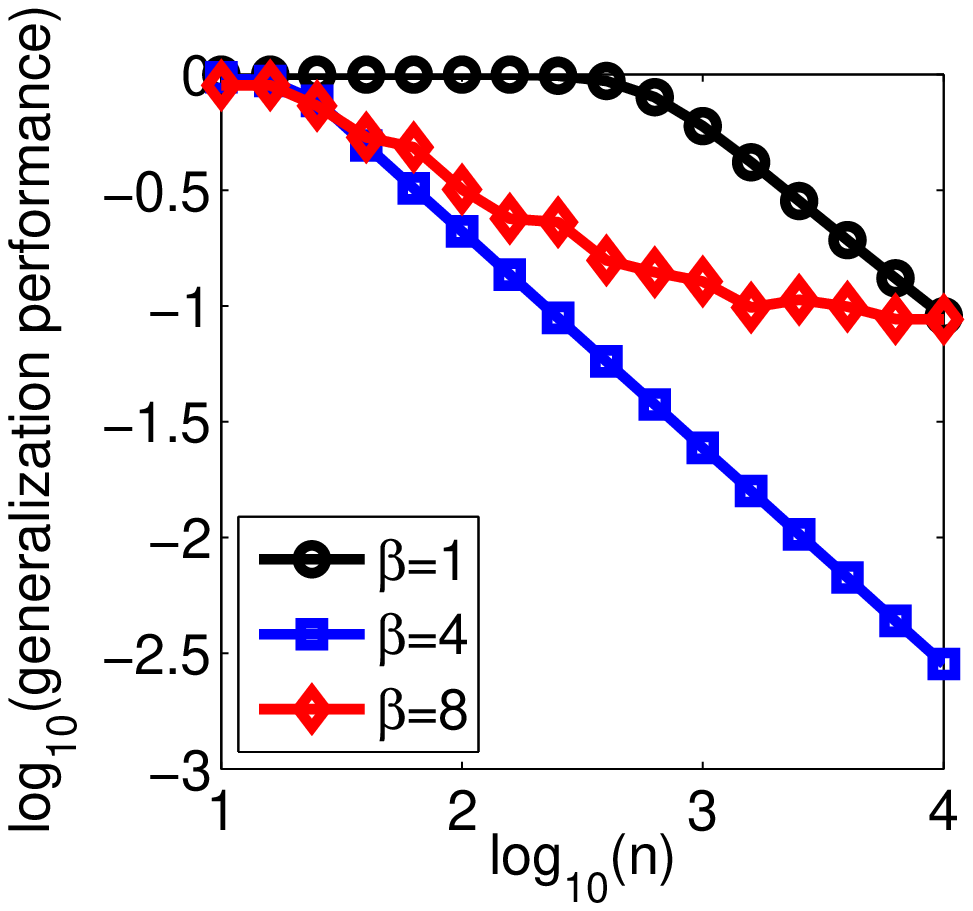} \hspace*{.25cm}
\includegraphics[scale=.5]{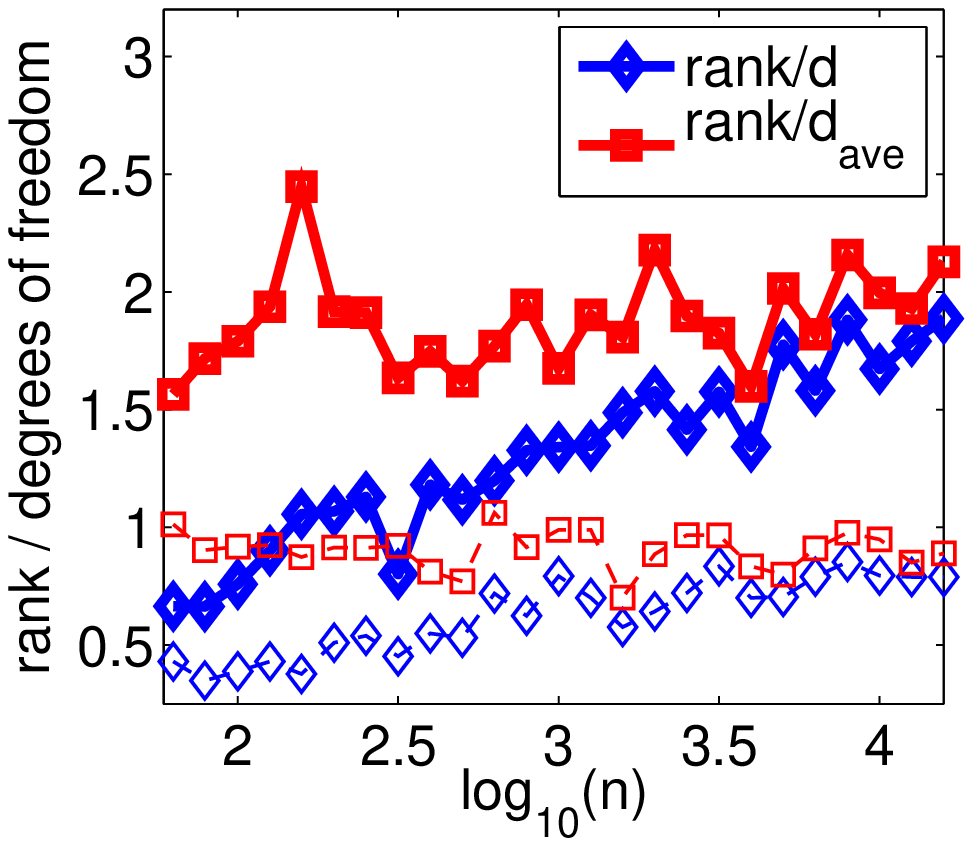}
\hspace*{-.75cm}

 \vspace*{-.5cm}

\caption{Left and middle: Effect of size of RKHS in predictive performance. Right: Ratio of the sufficient rank to obtain $1\%$ worse predictive performance, over the degrees of freedom (plain: random column sampling, dashed: incomplete Cholesky decomposition with column pivoting).}
\label{fig:rkhs}
\end{figure}

\begin{figure}[ht]

 \vspace*{-.4cm}

\centering
\hspace*{-.5cm}
\includegraphics[scale=.5]{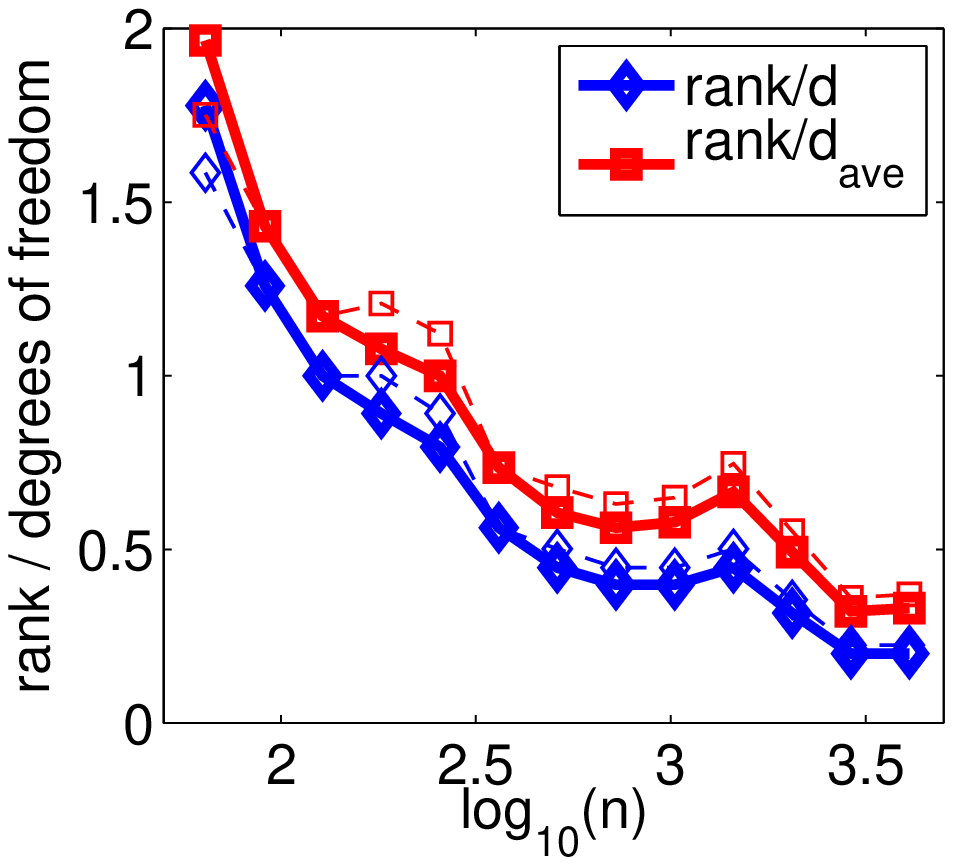} \hspace*{.25cm}
\includegraphics[scale=.5]{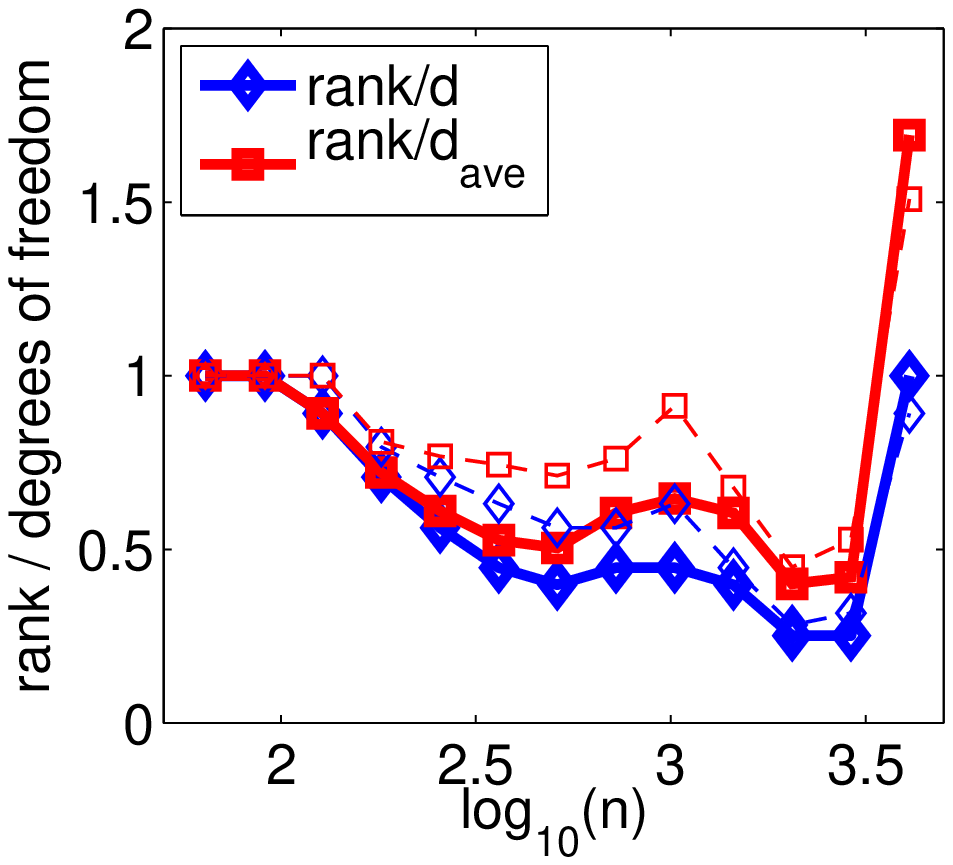} \hspace*{.25cm}
\includegraphics[scale=.5]{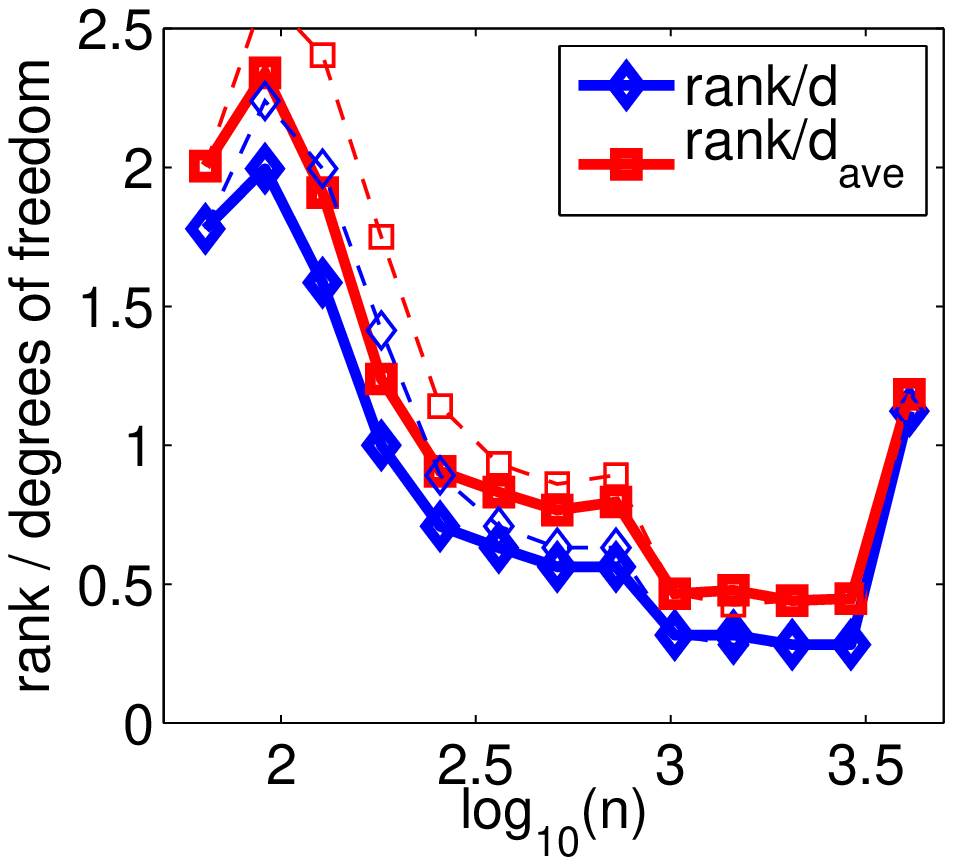}
\hspace*{-.5cm}

 \vspace*{-.5cm}

\caption{Ratio of the sufficient rank to obtain $1\%$ worse predictive performance, over the degrees of freedom (plain: random column sampling, dashed: incomplete Cholesky decomposition with column pivoting). From left to right: \emph{pumadyn} datasets \emph{32fh}, \emph{32nh}, \emph{32nm}.}
\label{fig:m}

\end{figure}

\section{Conclusion}

In this paper, we have provided an analysis of column sampling for   kernel least-squares regression that shows that the rank may be chosen proportional to properly defined degrees of freedom of the problem, showing that the statistical quantity characterizing prediction performance also plays a computational role. The current analysis could be extended in various ways: First, other column sampling schemes beyond uniform, such as presented by~\citet{boutsidis2009improved,ameet}, could be considered with potentially better behavior. Moreover, while we have focused on a fixed design, it is of clear interest to extend our results to random design settings using tools from~\citet{hsu2011analysis}.
The analysis may also be extended to other losses than the square loss, such as the logistic loss, using self-concordant analysis~\citep{bach2010self} or the hinge loss, using eigenvalue-based criteria~\citep{blanchard2008statistical} or tighter approaches to sample complexity analysis~\citep{sabato2010tight}. Finally, in this paper, we have considered a batch setting and extending our results to online settings, in the line of \citet{tarres2011online}, is of significant practical and theoretical interest.

\acks{
This work was supported by the European Research
Council (SIERRA Project). The author would like to thank the reviewers  for suggestions that have helped improve the clarity of the paper.}

\newpage

\appendix

\section{Duality for kernel supervised learning}
\label{app:dual}
In this section, we review classical duality results for kernel-based supervised learning, which extends the dual problem of the support vector machine to all losses. For more details and examples, see~\citet{rifkin2007value}.
We consider the following problem, where $\mathcal{F}$ is an RKHS with feature map
$\phi: \mathcal{X} \to \mathcal{F}$:
$$
\min_{f \in \mathcal{F}} \frac{1}{n} \sum_{i=1}^n \ell(y_i,f(x_i) ) + \frac{\lambda}{2} \| f\|^2,
$$
which may be rewritten with the feature map $\phi$ as:
$$
\min_{f \in \mathcal{F}, u \in \rb^n, } \frac{1}{n} \sum_{i=1}^n \ell(y_i, u_i ) + \frac{\lambda}{2} \| f\|^2 \mbox{ such that } u_i = \langle f, \phi(x_i) \rangle.
$$
We may then introduce dual parameters (Lagrange multipliers) $\alpha \in \rb^n$ and the Lagrangian
$$
\mathcal{L}(f,u,\alpha) = 
\frac{1}{n} \sum_{i=1}^n \ell(y_i, u_i ) + \frac{\lambda}{2} \| f\|^2 + \lambda \sum_{i=1}^n \alpha_i ( u_i -  \langle f, \phi(x_i) \rangle ).
$$
Minimizing with respect to $(f,u)$, we get $f = \sum_{i=1}^n \alpha_i \phi(x_i)$ and the dual problem:
$$
\max_{\alpha \in \rb^n} -g(-\lambda \alpha) - \frac{\lambda}{2} \alpha^\top K \alpha,
$$
where, for $z \in \rb^n$,  $g(z) = \max_{ u \in \rb^n} 
 - \frac{1}{n} \sum_{i=1}^n \ell(y_i,u_i )  + u_i z_i
$ is the Fenchel-conjugate of the empirical risk.

\section{Proof of Theorem~\ref{theo:perf}}
\label{app:proof}

We first prove a lemma that provides a Bernstein-type inequality for subsampled covariance matrices. The proof follows \citet{tropp2010improved,tropp2011user} and \citet{gittens2011spectral}.

\subsection{Concentration of subsampled covariance matrices}
Given the matrix $\Psi \in \rb^{ n\times r }$ and $I \subset \{1,\dots,p\}$, we denote
by $\Psi_I$ the submatrix of $\Psi$ composed of the rows of $\Psi$ indexed by $I$.

\begin{lemma}[Concentration of subsampled covariance]
\label{lemma:concentration}
Let $\Psi \in \rb^{n \times r}$, with all rows of $\ell_2$-norm less than $R$. Let $I$ be a random subset of $\{1,\dots,n\}$ with $p$ elements (i.e., $p$ elements chosen \emph{without} replacement uniformly at random).
Then, for all $t>0$,
$$\PP_I \Big( \lambda_{\max} \Big[  \frac{1}{n} \Psi^\top \Psi -  \frac{1}{p}\Psi_I^\top \Psi_I \Big] > t \Big) \leqslant  r \exp \bigg(
\frac{ - p t^2/2 }{  \lambda_{\max} ( \frac{1}{n} \Psi^\top \Psi) ( R^2  + t / 3 ) }
\bigg).$$
\end{lemma}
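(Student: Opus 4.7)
Let $\psi_i \in \rb^r$ denote the $i$-th row of $\Psi$ viewed as a column vector, and write $M = \frac{1}{n}\Psi^\top\Psi = \frac{1}{n}\sum_{i=1}^n \psi_i\psi_i^\top$ and $\alpha = \lambda_{\max}(M)$. Since $\frac{1}{p}\Psi_I^\top\Psi_I = \frac{1}{p}\sum_{k=1}^p \psi_{I_k}\psi_{I_k}^\top$, the target quantity is $\lambda_{\max}\bigl(\frac{1}{p}\sum_{k=1}^p Z_k\bigr)$ with the centered rank-one perturbations $Z_k = M - \psi_{I_k}\psi_{I_k}^\top$. The plan is to apply the one-sided matrix Bernstein inequality of \citet{tropp2011user} after reducing sampling without replacement to i.i.d.\ uniform sampling.

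First I would invoke the matrix Hoeffding-type comparison from \citet{tropp2011user,gittens2011spectral}: since the matrix Laplace transform $\E\,\tr\exp(\theta \sum_k Z_k)$ underlying Bernstein's bound is a convex function of the $p$-tuple of samples, its value under $p$ draws without replacement is dominated by its value under $p$ i.i.d.\ uniform draws. This reduces the problem to the i.i.d.\ case, where $Z_1,\dots,Z_p$ are i.i.d.\ mean-zero. I would then compute the two Bernstein ingredients. For the deterministic upper bound, positivity $\psi_{I_k}\psi_{I_k}^\top \succeq 0$ gives $Z_k \preceq M$, hence $\lambda_{\max}(Z_k) \leqslant \alpha$. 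For the matrix variance, expanding and using $\E[\psi_i\psi_i^\top] = M$ yields
\[
\E[Z_k^2] \ = \ \E\bigl[\|\psi_i\|^2 \psi_i\psi_i^\top\bigr] - M^2 \ \preceq \ R^2 M,
\]
from $\|\psi_i\|^2 \leqslant R^2$, and hence $\bigl\|\sum_{k=1}^p \E[Z_k^2]\bigr\|_{\rm op} \leqslant p R^2 \alpha$.

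Plugging $L = \alpha$ and $\sigma^2 = p R^2 \alpha$ into the one-sided matrix Bernstein bound $r\exp\bigl(-s^2/[2(\sigma^2 + Ls/3)]\bigr)$ and setting $s = pt$ produces exactly the advertised inequality. The main subtle point is the without-replacement reduction, which is why I would route through the matrix Hoeffding comparison rather than a direct independence argument. The other non-obvious ingredient is to exploit the \emph{one-sided} form of matrix Bernstein with $\lambda_{\max}(Z_k) \leqslant \alpha$ instead of the cruder two-sided bound $\|Z_k\|_{\rm op} \leqslant R^2$: the latter would replace $\alpha$ by $R^2$ in the second term of the denominator, a genuine loss in the kernel regime where $\alpha \ll R^2$.
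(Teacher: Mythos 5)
Your proposal is correct and follows essentially the same route as the paper: the same reduction from sampling without replacement to i.i.d.\ sampling via the matrix Laplace-transform comparison of \citet{gittens2011spectral,tropp2011user}, the same one-sided bound $\lambda_{\max}(Z_k)\leqslant\lambda_{\max}(\frac{1}{n}\Psi^\top\Psi)$, and the same variance bound $\E[Z_k^2]\preccurlyeq R^2\,\frac{1}{n}\Psi^\top\Psi$ fed into the matrix Bernstein inequality. The only difference is cosmetic (you keep the summands unnormalized and set $s=pt$, whereas the paper folds the $1/p$ into each summand), and your closing remark about why the one-sided form with $\alpha$ rather than $R^2$ matters is exactly the point of the lemma.
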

\begin{proof}
Let $\psi_1,\dots,\psi_n \in \rb^r$ be the $n$ rows of $\Psi$.
We consider the matrix $\Delta \in \rb^{ r \times r }$ defined as: 
$$
\Delta = 
 \frac{1}{n} \Psi^\top \Psi -  \frac{1}{p}\Psi_I^\top \Psi_I
 = \frac{1}{n} \sum_{i=1}^n \psi_i \psi_i^\top
 -\frac{1}{p} \sum_{i \in I} \psi_i \psi_i^\top.
$$
By construction, we have $\E \Delta = 0$, and, as shown by~\citet{tropp2010improved,tropp2011user} and \citet{gittens2011spectral}, we have
$$
\E \tr \exp ( s \Delta ) \leqslant \E \tr \exp ( s \Xi ) ,
$$
where $\Xi$ is obtained by sampling  independently $p$ rows \emph{with} replacement, i.e., is equal to 
$$ \Xi = \frac{1}{n} \sum_{i=1}^n \psi_i \psi_i^\top
 -\frac{1}{p} \sum_{j=1}^p \sum_{i=1}^n z_i^j  \psi_i \psi_i^\top,
$$
where $z^j \in \rb^n$ is a random element of the canonical basis of $\rb^n$ such that 
$\PP(z^j_i=1) = \frac{1}{n}$ for all $i \in \{1,\dots,n\}$ and $j\in\{1,\dots,p\}$. This result extends to the matrix case the classical result of~\citet{hoeffding}.

We thus have:
$$
\Xi =  \frac{1}{p} \sum_{j=1}^p \bigg(
\frac{1}{n} \sum_{i=1}^n \psi_i \psi_i^\top -
\sum_{i=1}^n z_i^j  \psi_i \psi_i^\top
\bigg) =   \sum_{j=1}^p M_j,
$$
with $M_j = \frac{1}{p} \big(
\sum_{i=1}^n z_i^j  ( \frac{1}{n} \Psi^\top \Psi - \psi_i \psi_i^\top)
\big)$. We have $\E M_j = 0$, 
$\lambda_{\max}(M_j) \leqslant \lambda_{\max}(\frac{1}{n} \Psi^\top \Psi) / p$, and
\BEAS
\lambda_{\max} \bigg(
\sum_{j=1}^p \E M_j^2
\bigg)
& = & 
\frac{1}{p^2}
\lambda_{\max} \bigg(
\sum_{j=1}^p \sum_{i=1}^n \sum_{k=1}^n \E z_i^j  z_k^j  
\big( \frac{1}{n} \Psi^\top \Psi - \psi_i \psi_i^\top\big)\big( \frac{1}{n} \Psi^\top \Psi - \psi_k \psi_k^\top \big)
\bigg) \\
& = & 
\frac{1}{p}
\lambda_{\max} \bigg(
\frac{1}{n}\sum_{i=1}^n   
\big( \frac{1}{n} \Psi^\top \Psi - \psi_i \psi_i^\top\big)^2
\bigg) \mbox{ because } \E z_i^j z_k^j = \frac{1}{n} \delta_{i=k},\\
& = & 
\frac{1}{p}
\lambda_{\max} \big(
\frac{1}{n}\sum_{i=1}^n   \psi_i \psi_i^\top \psi_i \psi_i^\top
- 
 \big( \frac{1}{n} \Psi^\top \Psi \big)^2
\big) \leqslant \frac{1}{p}
\lambda_{\max} \big(
\frac{1}{n}\sum_{i=1}^n   \psi_i \psi_i^\top \psi_i \psi_i^\top
\big) \\
& \leqslant & 
\frac{R^2}{p}
\lambda_{\max} \big(
 \frac{1}{n} \Psi^\top \Psi 
\big) \mbox{ because } 
 \psi_i \psi_i^\top \psi_i \psi_i^\top
 \preccurlyeq   R^2 \psi_i   \psi_i^\top.
\EEAS
We can then apply the matrix Bernstein inequality of~\citet[Theorem 6.1]{tropp2011user} to obtain the probability bound:
$$
r \exp \bigg( - \frac{ t^2/2}{ \displaystyle\frac{R^2}{p} \lambda_{\max} \big(
 \frac{1}{n} \Psi^\top \Psi 
\big) + \frac{1}{p} \lambda_{\max} \big(
 \frac{1}{n} \Psi^\top \Psi 
\big) \frac{t}{3} }\bigg),
$$
which leads to the desired result.

\end{proof}

\subsection{Proof of Theorem~\ref{theo:perf}}
 
 \paragraph{Proof principle.}
 
 Let $\Phi \in \rb^{n \times n}$ be such that  $K = \Phi \Phi^\top$. Note that if $K$ had  rank $r$, we could instead choose  $\Phi \in \rb^{ n \times r}$.
 
 We consider the regularized low-rank approximation $L_\gamma = \Phi N_\gamma \Phi^\top$, with 
 \BEQ
 \label{eq:CC}
 N_\gamma = \Phi_I^\top ( \Phi_I \Phi_I^\top + p \gamma \idm)^{-1} \Phi_I = \Phi_I^\top \Phi_I  (  \Phi_I^\top \Phi_I + p \gamma \idm)^{-1} = \idm - \gamma  (  \Phi_I^\top \Phi_I /p+  \gamma \idm)^{-1}
 \EEQ
 (obtained using the matrix inversion lemma).
 We have $L = L_0$ but we will consider $L_\gamma$ for $\gamma>0$ to obtain a bound for $\gamma=0$, using a monotonicity argument.

Following the same reasoning than in \mysec{df}, the in-sample prediction error $\frac{1}{n} \E_\varepsilon  \| \hat{z}_{L_\gamma} - z\|^2$ is equal to
\BEAS
 \frac{1}{n}\E_\varepsilon  \| \hat{z}_{L_\gamma} - z\|^2 &  = &  n  \lambda^2  \|    (\Phi N_\gamma \Phi^\top + n \lambda \idm)^{-1} z    \|^2   +  \frac{1}{n}\tr  C \big[ \Phi N_\gamma \Phi^\top   (\Phi N_\gamma \Phi^\top  + n \lambda \idm)^{-1} \big]^2 \\
 & = & 
 {\rm bias}(L_\gamma) + {\rm variance}(L_\gamma) .
 \EEAS
 The function $\gamma \mapsto N_\gamma$ is matrix-non-increasing (i.e., if $\gamma \geqslant \gamma'$, then $N_\gamma \preccurlyeq N_{\gamma'}$).
Therefore, we have  $ 0 \preccurlyeq N_\gamma \preccurlyeq N_0 \preccurlyeq \idm$.
Since  the variance term ${\rm variance}(L_\gamma) = \frac{1}{n}\tr  C \big[ \Phi N_\gamma \Phi^\top   (\Phi N_\gamma \Phi^\top  + n \lambda \idm)^{-1} \big]^2$ is non-decreasing in $N_\gamma$, this implies that the variance term with $N_\gamma$ is smaller than the one with $N_0$ and then less then the one with $N_\gamma$ replaced by $\idm$ (which corresponds to the variance term without any approximation). For the bias term we have:
 \BEQ
 \label{eq:bb}
 {\rm bias}(L_\gamma)  =  n \lambda^2  \|    (\Phi N_\gamma \Phi^\top + n \lambda \idm)^{-1} z  \|^2  
 =   n \lambda^2 z^\top     (\Phi N_\gamma \Phi^\top  + n \lambda \idm)^{-2} z   ,
\EEQ
which is a non-decreasing function of $\gamma$. Therefore, if we prove an upper-bound on the bias term for any $\gamma > 0$, we have a bound for $\gamma=0$. This requires \emph{lower-bounding} $N_\gamma$.

\paragraph{Lower-bounding $N_\gamma$.}
Let $\Psi = \Phi ( \frac{1}{n} \Phi^\top \Phi + \gamma \idm)^{-1/2} \in \rb^{n \times n}$. We may rewrite $N_\gamma$ defined in \eq{CC} as
\BEAS
N_\gamma & = & 
\idm - \gamma  (  \frac{1}{p} \Phi_I^\top \Phi_I +  \gamma \idm)^{-1} \\
& = & \idm - \gamma \big(  \frac{1}{n} \Phi^\top \Phi + \gamma \idm  - \frac{1}{n} \Phi^\top \Phi  +  \frac{1}{p} \Phi_I^\top \Phi_I \big)^{-1}
\\
& = & 
\idm  - \gamma
 ( \frac{1}{n} \Phi^\top \Phi + \gamma \idm)^{-1/2}  
 \bigg[
 \idm - \frac{1}{n} \Psi^\top \Psi +  \frac{1}{p}\Psi_I^\top \Psi_I
 \bigg]^{-1}
  ( \frac{1}{n} \Phi^\top \Phi + \gamma \idm)^{-1/2}  .
\EEAS
Thus, in order to obtain a lower-bound on  $N_\gamma$, it suffices to have an upper-bound of the form
\BEQ
\label{eq:DD}
\lambda_{\max} \big(
\frac{1}{n} \Psi^\top \Psi -  \frac{1}{p}\Psi_I^\top \Psi_I
\big) \leqslant t,
\EEQ
which would imply
\BEAS
\idm - N_\gamma & \preccurlyeq & \frac{\gamma}{1-t} \big( \frac{1}{n} \Phi^\top \Phi + \gamma \idm \big)^{-1}, \\
K - L_\gamma & = & \Phi( \idm - N_\gamma) \Phi^\top \preccurlyeq  \frac{\gamma}{1-t} \Phi ( \frac{1}{n} \Phi^\top \Phi + \gamma \idm)^{-1} \Phi^\top
=  \frac{n \gamma}{1-t} ( K + n\gamma \idm)^{-1} K   
\preccurlyeq  \frac{n \gamma}{1-t} \idm.
\EEAS
Assume $ \frac{\gamma / \lambda }{1-t} \leqslant 1$.
We then have, using the previous inequality:
$$
(L_\gamma  + n \lambda \idm)^{-1} 
\preccurlyeq \big( K -    \frac{n \gamma}{1-t} \idm+ n \lambda \idm \big)^{-1} 
= \big( K + n\lambda \big[ 1 - \frac{\gamma/\lambda}{1-t} \big] I\big)^{-1}
\preccurlyeq ( 1- \frac{  \gamma/\lambda}{1-t})^{-1} (K  + n \lambda \idm)^{-1}.
$$
Thus, the bias term in \eq{bb} is   less than the original bias term times
$
( 1 - \frac{ \gamma / \lambda }{1-t})^{-2}
$. If the bound defined in \eq{DD}  is not met, then we can upper-bound the bias term by
$\frac{1}{n} z^\top z$, which is itself upper-bounded by the unapproximated bias term times
$( 1 + \frac{R^2}{\lambda} ) $---indeed, we have
$n  \lambda^2 z^\top (K + n \lambda \idm)^{-2} z
 \geqslant    n  \lambda^2 z^\top z ( n \lambda + n R^2 )^{-2} =  \frac{1}{n}z^\top z ( 1 +  R^2/ \lambda )^{-2}$. Thus if we define $\pi_t = \PP_I \Big( \lambda_{\max} \Big[  \frac{1}{n} \Psi^\top \Psi -  \frac{1}{p}\Psi_I^\top \Psi_I \Big] > t \Big) $, then, 
 $E_I \big[ {\rm bias}(L_\gamma) \big]$ is  upper-bounded by
\BEQ
\label{eq:B}
B = \pi_t   ( 1 + R^2 /\lambda)  + ( 1 - \pi_t)  \big( 1 - \frac{ \gamma / \lambda }{1-t} \big)^{-2}  ,
\EEQ
times ${\rm bias}(K)$.

\paragraph{Probabilistic control.}

We need to upper-bound the largest eigenvalue of
       $
       \frac{1}{n} \Psi^\top \Psi -  \frac{1}{p}\Psi_I^\top \Psi_I
       $,
       where $I$ is a random subset of $\{1,\dots,n\}$ of cardinality $p$. This is the difference between an empirical second-order moment and the empirical moment of a subset of $p$ random elements. In order to apply Lemma~\ref{lemma:concentration}, we  need to upper-bound the squared $\ell_2$-norm as (assuming $\gamma \leqslant \lambda$):
       \BEAS
       \max_{i \in \{1,\dots,n\}}
       \big(
       \Psi \Psi^\top
       \big)_{ii}
       & = &   \max_{i \in \{1,\dots,n\}}
\big(\Phi ( \frac{1}{n} \Phi^\top \Phi + \gamma \idm)^{-1} \Phi^\top \big)_{ii}\\
& = & \lambda \gamma^{-1} \max_{i \in \{1,\dots,n\}}
\big( \Phi ( \frac{1}{n} (\lambda \gamma^{-1})\Phi^\top \Phi + \lambda \idm)^{-1} \Phi ^\top  \big)_{ii} \\
& \leqslant &\lambda \gamma^{-1} \max_{i \in \{1,\dots,n\}}
\big(  \Phi ( \frac{1}{n} \Phi^\top \Phi + \lambda \idm)^{-1} \Phi^\top  \big)_{ii}
\mbox{ because } \gamma \leqslant \lambda,
\\
& = & n \lambda \gamma^{-1} \big\|
\diag \big( K ( K + n \lambda \idm)^{-1}   \big) \big\|_\infty = \lambda\gamma^{-1} d.
\EEAS
       Thus for $\gamma \leqslant \lambda$,  all rows of $\Psi$ have a squared $\ell_2$-norm upper-bounded by $\lambda \gamma^{-1} d$, and $ \frac{1}{n} \Psi^\top \Psi  \preccurlyeq \idm$, we can apply Lemma~\ref{lemma:concentration}, to obtain that:     
\BEQ
\label{eq:proba} \pi_t = \PP_I \Big( \lambda_{\max} \Big[  \frac{1}{n} \Psi^\top \Psi -  \frac{1}{p}\Psi_I^\top \Psi_I \Big] > t \Big) \leqslant  n \exp \bigg(
\frac{ - p t^2/2 }{ \lambda \gamma^{-1} d   + t / 3 }
\bigg).\EEQ

Using the bound from \eq{B}, we get, given $\delta \in (0,1)$, $ t= 1/2$, and $\gamma   = \frac{\lambda \delta }{4}$ (which satisfy $\gamma \leqslant \lambda$
and $\frac{ \gamma/ \lambda}{1-t} = \delta/2 \leqslant 1$),
\BEAS
B & = & \big( 1 + \frac{R^2}{\lambda} \big) \pi_t + ( 1 - \pi_t ) 
\bigg[ \big( 1 - \frac{ \gamma / \lambda }{1-t} \big)^{-2}   - 1 \bigg] \\
& \leqslant &  1 +\frac{n R^2}{\lambda} \exp \bigg(
\frac{ -     p / 8   }{ 4 d / \delta  + 1/6   }
\bigg) + \bigg[
( 1 - \delta/2)^{-2} - 1
\bigg] \\
& \leqslant & 1 + \frac{n R^2}{\lambda} \exp \bigg(
\frac{ -     p     }{ 32 d / \delta  + 2   }
\bigg) + \bigg[ \frac{ \delta  - \delta^2/4}{( 1 - \delta/2)^2}
\bigg] \\
& = &  1 +\frac{n R^2}{\lambda} \exp \bigg(
\frac{ -     p     }{ 32 d / \delta  + 2   }
\bigg) + \delta \bigg[ \frac{ 1  - \delta/4}{( 1 - \delta/2)^2}
\bigg] \\
& \leqslant &  1 +\frac{n R^2}{\lambda} \exp \bigg(
\frac{ -     p     }{ 32 d / \delta  + 2   }
\bigg) + \delta \bigg[ \frac{ 3/4}{1/4}
\bigg] =  1 + \frac{n R^2}{\lambda} \exp \bigg(
\frac{ -     p     }{ 32 d / \delta  + 2   }
\bigg) + 3 \delta  .
\EEAS
Thus, if $p \geqslant \big( \frac{32 d }{\delta} + 2 \big) \log \frac{n R^2}{\delta \lambda}$, we obtain that $B \leqslant 1 + 4 \delta$.

Thus,
\BEAS
\E_I \big[ {\rm bias}(L) + {\rm variance}(L)  \big]
& \leqslant & \E_I \big[ {\rm bias}(L_\gamma)  + {\rm variance}(K)  \big] \mbox{ by monotonicity}
\\
& = & \E_I \big[ {\rm bias}(L_\gamma)]   + {\rm variance}(K)   \\
& \leqslant & (1+ 4\delta) {\rm bias}(K)   + {\rm variance}(K)     \\
& \leqslant &  (1+ 4\delta)  \big[{\rm bias}(K)   + {\rm variance}(K)  \big],\EEAS
 which is the desired result. Note that
 \BIT
 \item[--] We could improve the bound by expliciting the reduction of the variance term.
 \item[--] In some situations, the prediction performance for the approximated version may in fact be smaller than the non-approximated version.
 \item[--] The proof technique relies on a high-probability bound with respect to the sampling of columns. More precisely, from \eq{B} and \eq{proba}, with $t = 1/2$ and $\gamma = \frac{\lambda\delta}{4}$, we get:
 \BEQ
 \label{eq:high}
  \PP \big(  \frac{1}{n}\E_\varepsilon  \| \hat{z}_L - z\|^2
 \geqslant   \big( 1 -  \textstyle \frac{\delta}{2} \displaystyle \big)^{-2}  \frac{1}{n}\E_\varepsilon  \| \hat{z}_K - z\|^2
 \big) \leqslant  n \exp \bigg(
\frac{ -     p     }{ 32 d / \delta  + 2   }
\bigg)  .
 \EEQ
 \EIT

\section{Asymptotics of bias and variance terms}

\label{app:lambda}

In this appendix, we consider various decays of eigenvalues $n\mu_i$ of $K$ and components $\sqrt{ n \nu_i}$ (in magnitude) of the signal $z$ to estimate. We follow the reasoning of~\citet{Har_Bac_Mou:2008}, i.e., replacing sums by integrals. Given our assumptions, we have:
\BEAS
{\rm bias} & = &  n^2 \lambda^2 \sum_{i=1}^n \frac{ n \nu_i }{ (n \mu_i + n \lambda)^2} =  n \lambda^2 \sum_{i=1}^n \frac{   \nu_i }{ (\mu_i +  \lambda)^2} ,\\
\frac{n}{\sigma^2} {\rm variance} & = & \sum_{i=1}^n \frac{ n^2 \mu_i^2}{ (n \mu_i + n \lambda)^2} 
= \sum_{i=1}^n \frac{  \mu_i^2}{ (  \mu_i +   \lambda)^2} .
\EEAS

For all cases we need to consider, for simplicity, we only provide an upper-bound for $\mu_i$ exactly equal to its asymptotic equivalent. Considering lower-bounds and a constant times the asymptotic equivalent may be done in a similar way.

\subsection{Variance terms}
 We consider the only two possible cases (the variance term only depends on $(\mu_i)$). Moreover we show that the two traditional definitions of the degrees of freedom, $\tr K ( K + n \lambda \idm)^{-1}$ and $\tr K^2 ( K + n \lambda \idm)^{-2}$, have the same asymptotically equivalents.
 
\paragraph{Polynomial decay ($\mu_i = i^{-2 \beta}$, $\beta>1/2$). }

The renormalized variance term is less than
\BEAS
\sum_{i=1}^n \frac{  1}{ (  1 + i^{2 \beta}   \lambda)^2} 
& \leqslant & 
\int_{0}^n \frac{  1}{ (  1 + t^{2 \beta}   \lambda)^2} dt \\
& = & 
\int_{0}^{\lambda n^{2\beta}} \frac{  1}{ (  1 + u)^2 }  {\lambda}^{-1/2\beta} u^{ 1/2\beta - 1} \frac{1}{2\beta} du  
\mbox{ with the change of variable }  u = \lambda t^{2\beta}, \\
& \leqslant & \int_{0}^{\infty} \frac{  1}{ (  1 + u)^2 }  {\lambda}^{-1/2\beta} u^{ 1/2\beta - 1} \frac{1}{2\beta} du   \\
& = & O ( {\lambda}^{-1/2\beta}) \mbox{ since the integral is finite.}
\EEAS

With the same reasoning, we have
$\tr K ( K + n \lambda \idm)^{-1} \leqslant 
\int_{0}^{\lambda n^{2\beta}} \frac{  1}{ (  1 + u) }  {\lambda}^{-1/2\beta} u^{ 1/2\beta - 1} \frac{1}{2\beta} du  = O ( {\lambda}^{-1/2\beta}) .
$

\paragraph{Exponential decay ($\mu_i = e^{ - \rho i }$). }

The renormalized variance term is less than
\BEAS
\sum_{i=1}^n \frac{  1}{ (  1 +e^{\rho i }\lambda)^2} 
& \leqslant & 
\int_{0}^n \frac{  1}{ (  1 +e^{\rho t }\lambda)^2} dt
= 
\int_{0}^n \frac{  e^{-2 \rho t }  }{ (  e^{-\rho t } +\lambda)^2} dt \\
& = & \frac{1}{ \rho}
\int_{e^{- \rho n}}^1 \frac{  u }{ (  u  +\lambda)^2} du 
\mbox{ with the change of variable } u =e^{-\rho t } \\
& \leqslant & \frac{1}{ \rho}
\int_{0}^1  \bigg( \frac{  1 }{  u  +\lambda} -  \frac{  \lambda }{ (  u  +\lambda)^2} \bigg) du  \leqslant  \frac{1}{ \rho}
\int_{0}^1    \frac{  1 }{  u  +\lambda}  du  \\
& = & \frac{1}{\rho} \big[  \log ( 1 + \lambda)   - \log \lambda  \big] = O( \log \frac{1}{\lambda} ).
\EEAS
We the same technique, we get bounds on $\tr K ( K + n \lambda \idm)^{-1}$ in the same way we just did for $\tr K^2 ( K + n \lambda \idm)^{-2}$.

\subsection{Bias terms}

The bias terms depend on both $(\mu_i)$ and $(\nu_i)$ and we consider all combinations.

\paragraph{Polynomial decays ($\mu_i = i^{-2 \beta}$,  $\nu_i = i^{-2 \delta}$,
$ \beta, \delta > 1/2$). }
The bias term is less than
\BEQ
\label{eq:AA}
n \lambda^2 \sum_{i=1}^n \frac{   i^{4 \beta -2 \delta} }{ (1 + i^{2 \beta}  \lambda)^2} 
\leqslant
 2 n \lambda^2
\int_1^n
\frac{   t^{4 \beta -2 \delta} }{ (1 + t^{2 \beta}  \lambda)^2} dt.
\EEQ

If $2 \delta - 4 \beta > 1$, then we have  an upper bound of 
$2 n \lambda^2
\int_1^\infty
    t^{4 \beta -2 \delta}  dt = O( n \lambda^2 )$, because the integral is finite.

If $2 \delta - 4 \beta < 1$, then we can further bound \eq{AA} as
\BEAS
2 n \lambda^2
\int_1^n
\frac{   t^{4 \beta -2 \delta} }{ (1 + t^{2 \beta}  \lambda)^2} dt
& = & 
2 n \lambda^2
\int_\lambda ^{n^{2\beta} \lambda}
\frac{   u^{2 - \delta/ \beta + \frac{1}{2\beta} - 1 }
\lambda^{-2 + \delta/ \beta - \frac{1}{2\beta} }}{ (1 + u)^2} \frac{1}{2\beta} du \\
& & 
\mbox{ with the change of variable }  u = \lambda t^{2\beta} \\
&  = & O( \lambda^{ (2\delta - 1)/ 2 \beta  })
\int_0 ^\infty 
\frac{   u^{2 - \delta/ \beta + \frac{1}{2\beta} - 1 }
 }{ (1 + u)^2} \frac{1}{2\beta} du = O( \lambda^{ (2\delta - 1)/ 2 \beta  }),
\EEAS
because the integral is finite (due to the assumptions made on $\beta$ and $\delta$).

\paragraph{Exponential decays ($\mu_i = e^{ - \rho i }$, $\nu_i = e^{ - \kappa i }$,
$\rho,\kappa>0$). }

The bias term is less than
\BEAS
 n \lambda^2 \sum_{i=1}^n \frac{  e^{(2 \rho -\kappa)i }}{ (1 +  e^{\rho i} \lambda)^2} 
 & \leqslant & n \lambda^2  \int_1^n 
 \frac{  e^{(  \rho -\kappa)t }}{ (1 +  e^{\rho t} \lambda)^2} e^{  \rho t} dt\\
& = &  \frac{n \lambda }{\rho}\int_\lambda ^{\lambda e^{n\rho}} 
 \frac{ (u/\lambda)^{1 - \kappa/\rho} }{ (1 +  u  )^2} d u
 \mbox{ with the change of variables } u = \lambda e^{\rho t}.
\EEAS
If $\kappa / \rho > 2$, then we have a bound
$$
\frac{n \lambda^2 }{\rho}\int_1 ^{  \infty} 
 \frac{ u^{1 - \kappa/\rho} }{ (1 +  \lambda u  )^2} du = O ( n \lambda^2) ,
$$
because the integral is finite and uniformly bounded in $\lambda$.

If $\kappa / \rho < 2$, then we have a bound
$$
 \frac{n \lambda^{ \kappa/\rho} }{\rho}\int_\lambda ^{\lambda e^{n\rho}} 
 \frac{ u^{1 - \kappa/\rho} }{ (1 +  u  )^2} d u
 \leqslant 
 \frac{n \lambda^{ \kappa/\rho} }{\rho}\int_0 ^{\infty} 
 \frac{ u^{1 - \kappa/\rho} }{ (1 +  u  )^2} d u = O (n \lambda^{ \kappa/\rho} ) .
$$

\paragraph{Mixed decays.}
For $\mu_i$ with polynomial decays and $\nu_i$ with exponential decays,  we are in a situation where $\nu_i$ is decaying fast enough (faster than $i^{-2\delta}$ for any $\delta>1/2$) so that, given previous results, the bias is $n\lambda^2$. 

The only remaining result to show is $\mu_i = e^{ - \rho i}$ and $\nu_i = i^{-2 \delta}$, 
$\delta>1/2$, which we now consider. The bias term is equal to
\BEAS
n \lambda^2 \sum_{i=1}^n \frac{   \nu_i }{ (\mu_i +  \lambda)^2} 
& = & n \lambda^2 \sum_{i=1}^n \frac{   i^{-2 \delta} }{ (e^{ - \rho i}+  \lambda)^2}  \\
& = & n \lambda^2 \sum_{ i \leqslant \frac{1}{\rho} \log \lambda^{-1} } \frac{   i^{-2 \delta} }{ (e^{ - \rho i}+  \lambda)^2} 
+  n \lambda^2 \sum_{ n \geqslant i > \frac{1}{\rho} \log \lambda^{-1} } \frac{   i^{-2 \delta} }{ (e^{ - \rho i}+  \lambda)^2}  \\
& \leqslant & n \lambda^2 \sum_{ i \leqslant \frac{1}{\rho} \log \lambda^{-1} } \frac{   i^{-2 \delta} }{    e^{ - 2\rho i } } 
+  n \lambda^2 \sum_{ n \geqslant i > \frac{1}{\rho} \log \lambda^{-1} } \frac{   i^{-2 \delta} }{ \lambda^2 }  \\
& \leqslant & n   \sum_{ i \leqslant \frac{1}{\rho} \log \lambda^{-1} }    i^{-2 \delta}  +  n   \sum_{   i > \frac{1}{\rho} \log \lambda^{-1} }  {   i^{-2 \delta} }{   }  \\
& = & O(n) + O( n (  \log \lambda^{-1})^{ 1- 2 \delta} ) = O( n (  \log \lambda^{-1})^{ 1- 2 \delta} ).
\EEAS

 \subsection{Optimal regularization parameters}

We can now take all six cases, and compute the optimal $\lambda$ and the resulting optimal regularization error.

\BIT
\item[--] $\mu_i = i^{-2\beta} $, $\nu_i =  i^{-2\delta}$ ($2\delta > 4 \beta + 1$): we need to minimize with respect to $\lambda$ the function
$n^{-1}\lambda^{-1/2\beta} +   \lambda^2
$, which leads to $\lambda \approx n^{ - 1/ ( 2 + 1/2\beta)}$ and an optimal value of
$n^{  1/ ( 4 \beta + 1)-1}$.

\item[--] $\mu_i = i^{-2\beta} $, $\nu_i =  i^{-2\delta}$ ($2\delta < 4 \beta + 1$): we need to minimize with respect to $\lambda$ the function
$n^{-1} \lambda^{-1/2\beta} +   \lambda^{ ( 2 \delta -1 )/2\beta}
$, which leads to $\lambda \approx n^{ - \beta / \delta}$ and an optimal value of
$n^{  1/ ( 2\delta) - 1}$.

\item[--] $\mu_i = i^{-2\beta} $, $\nu_i =  e^{ - \kappa i}$: same computation as the first one.

\item[--] $\mu_i = e^{- \rho i}$, $\nu_i =  i^{-2\delta}$: we need to minimize with respect to $\lambda$ the function
$n^{-1}\log \frac{1}{\lambda} +   ( \log \frac{1}{\lambda})^{1 -2 \delta}
$, which leads to $\log \frac{1}{\lambda} \approx n^{ 1 / 2\delta}$ and an optimal value of
$n^{ 1 / 2\delta-1}$.

\item[--] $\mu_i = e^{- \rho i}$, $\nu_i =  e^{ - \kappa i}$ ($\kappa> 2 \rho$): we need to minimize with respect to $\lambda$ the function
$n^{-1} \log \frac{1}{\lambda} +   \lambda^2 $, which leads to $\lambda \approx n^{ -1 / 2 }$ and an optimal value of
$\log n/n$.

\item[--] $\mu_i = e^{- \rho i}$, $\nu_i =  e^{ - \kappa i}$ ($\kappa< 2 \rho$): we need to minimize with respect to $\lambda$ the function
$n^{-1} \log \frac{1}{\lambda} +  \lambda^{ \kappa / \rho} $, which leads to $\lambda \approx n^{ - \rho/\kappa }$ and an optimal value of
$\log n/n$.

\EIT

\section{Kernels on [0,1]}
\label{app:kernels}

In this appendix, we consider kernels on $\mathcal{X} = [0,1]$ that lead to closed-form expressions   (or asymptotic equivalents) for eigenvalues of $K$ and components of $z$. These are used in simulations. 

\paragraph{Kernels.}
For a positive summable sequence $(\mu_i)_{i \geqslant 1}$, we consider 
$k(x,y) = \sum_{i=1}^\infty 2 \mu_i \cos 2 i \pi (x-y)$. It is defined for any $(x,y) \in [0,1]^2$ and is 1-periodic in $x$ and $y$. It is a function $g$ of $
x - y - \lfloor x - y \rfloor$, i.e., $k(x,y) = g( x - y - \lfloor x - y \rfloor )$. Moreover $k(x,x)$ is independent of $x$.

For $\mu_i = \frac{1}{i^{2\beta}}$, we have  $k(x,y) = \frac{1}{(2\beta)!}
B_{2 \beta}(x-y - \lfloor x - y\rfloor) $,
 where
 $B_{2\beta}$ is the $(2\beta)$-th Bernoulli polynomial~\citep{wahba}. For example, we have $B_2(x) =   \textstyle x^2 - x + \frac{1}{6} $ and $
  B_6(x) =  \textstyle x^6 - 3x^5 + \frac{5}{2}x^4  - \frac{1}{2} x^2 + \frac{1}{42}$.
 
For $\mu_i = e^{-\rho i}$, we have, $k(x,y) = 2  \frac{ e^\rho \cos 2 \pi (x-y) - 1  }{ e^{2 \rho} - 2 e^\rho \cos 2 \pi (x-y) + 1    } $. Indeed,  we have
\BEAS
k(x,y) & = &  {\rm Re} \bigg( \sum_{i=1}^\infty 2 e^{-\rho i +  2 i \omega \pi (x-y) } \bigg)
\mbox{ with } \omega^2 = -1, \\
 & = &  2 {\rm Re} \bigg( \sum_{i=1}^\infty   e^{ i ( -\rho +  2   \omega \pi (x-y) ) } \bigg) =  2 {\rm Re} \bigg(  \frac{e^{   -\rho +  2   \omega \pi (x-y)  }}{
 1 - e^{   -\rho +  2   \omega \pi (x-y) } }\bigg) \\
 & = &  2 {\rm Re} \bigg(  \frac{1 }{
  e^{   \rho -  2   \omega \pi (x-y) } - 1  }\bigg) = 2  \frac{ e^\rho \cos 2 \pi (x-y) - 1  }{ e^{2 \rho} - 2 e^\rho \cos 2 \pi (x-y) + 1    } .
\EEAS 

\paragraph{Data and eigenvectors.}
If we consider $n$ data points $x_i = \frac{i-1}{n}$, $i=1,\dots,n$, then the kernel matrix 
$K$ has components $K_{ij} = k( \frac{i-1}{n},\frac{j-1}{n})$. It is a circulant matrix, thus it is diagonalizable in the discrete Fourier basis~\citep{gray}, with eigenvalues equal to the discrete Fourier transform of the first column of the matrix, i.e.,
$(g(0),g(1/n),\dots,g(1-1/n))^\top$.

Thus, the $i$-th eigenvector has $j$-th component $  \frac{1}{\sqrt{n}} e^{ 2\omega i (j-1) \pi / n } $ (with $\omega^2 = -1$) and the $i$-th eigenvalue is
\BEAS
\lambda_i & = & 
\sum_{j=1}^n   e^{ - 2\omega i (j-1) \pi / n } g ( (j-1)/n ) \\
&  = & 
\sum_{j=1}^n   e^{ - 2\omega i (j-1) \pi / n }  
\sum_{s=1}^\infty 2 \mu_s \cos 2 s \pi (j-1)/n  \\
&  = & 
\sum_{j=1}^n   e^{ - 2\omega i (j-1) \pi / n }  
\sum_{s=1}^\infty   \mu_s [ e^{ 2 s \omega \pi (j-1)/n} +e^{ -2 s \omega \pi (j-1)/n}  ]  \\
& = & n \sum_{s = 1}^\infty \mu_s 
\big[ \delta_{ s = i  [n] } + \delta_{ -s = i   [n] } \big] 
\mbox{ because of the orthonormality of the Fourier basis}, \\
& = & 
n \mu_i  + 
n \sum_{ h = 1}^\infty \mu_{i + h n}
+ n  \sum_{ h = 1}^\infty \mu_{ -i + h n}.
\EEAS
 If $n$ is large and $\mu_i$ tends to zero when $i$ tends to $+\infty$, then an  asymptotic equivalent for $\lambda_i$ is $n \mu_i$. 
 
 For data sampled from the uniform distribution in $[0,1]$, then similar equivalents hold~\citep[see, e.g.,][]{Har_Bac_Mou:2008}.

 \paragraph{Functions.} Let $f(x)
= \sum_{i=1}^\infty 2 \nu_i^{1/2} \cos 2 i\pi x$, for $\nu_i $ a non-negative summable sequence. We consider $z_i = f(x_i) = f( (i-1)/n)$. The component of $z$ on the $i$-th eigenvector of $K$ is (following the same reasoning as above):
\BEAS
& & \sum_{j=1}^n  \frac{1}{\sqrt{n}} e^{ - 2\omega i (j-1) \pi / n } f( (j-1)/n) \\
& = &  {\sqrt{n}} \bigg(
\nu_i^{1/2}  + 
\sum_{ h = 1}^\infty \nu_{i + h n}^{1/2}
+ \sum_{ h = 1}^\infty \nu_{ -i + h n}^{1/2}
\bigg),
\EEAS
and the asymptotic equivalent is $(n\nu_i)^{1/2}$.

\paragraph{Link with Sobolev spaces.}
The kernel $k(x,y)$ defined above corresponds for $\mu_i = i^{-2\beta}$ to certain Sobolev spaces~\citep{wahba,gu2002smoothing}. Indeed, for $\beta$ integer, the associated RKHS is the Sobolev space of periodic functions which are $\beta$-times differentiable.

Moreover, when $\nu_i = i^{-2 \delta}$, then for $\delta>\delta_0$, then the corresponding function is $(\delta_0 - 1/2)$-times differentiable, and the minimax rate of estimation is known to  be exactly $O(n^{1/2\delta_0})$~\citep{speckman1985spline,johnstone1994minimax}. Thus, up to logarithmic terms, the best possible rate is $O(n^{1/2\delta})$, and is achieved if $\beta$ is large enough (\mysec{decay}).

\bibliography{columnsampling}

\end{document}